\documentclass{article} 
\usepackage{fullpage}

\title{Learning Simple Auctions}
\usepackage{times}
\usepackage{natbib}
 \author{Jamie Morgenstern
\\
Tim Roughgarden
 }

\usepackage{pbox}
\usepackage{url}
\usepackage{algorithmicx}
\usepackage{enumitem}
\usepackage{amsmath}
\usepackage{amssymb,amsfonts}
\usepackage{theorem}
\usepackage{xspace}

\usepackage{url}

\newcommand{\argmax}{\textrm{argmax}}

\renewcommand{\v}{{\mathbf {v}}}

\newcommand{\V}{\mathcal{V}}
\newcommand{\Y}{\mathcal{Y}}

\newcommand{\com}{\textbf{compress}\xspace}
\newcommand{\decom}{\textbf{decompress}\xspace}
\newcommand{\PAC}{\ensuremath{\textsc{PAC}}\xspace}
\newcommand{\opt}{\ensuremath{\textsc{OPT}}\xspace}
\newcommand{\sam}{m}
\newcommand{\nbuy}{n}

\newtheorem{theorem}                            {Theorem}[section]

\newtheorem{corollary}          [theorem]       {Corollary}

{\theorembodyfont{\rmfamily} \newtheorem{definition}
[theorem]       {Definition}}
{\theorembodyfont{\rmfamily} }
{\theorembodyfont{\rmfamily} \newtheorem{remark}                [theorem]
{Remark}}
{\theorembodyfont{\rmfamily} \newtheorem{example}               [theorem]
{Example}}
{\theorembodyfont{\rmfamily} }
\theoremstyle{break}
{\theorembodyfont{\rmfamily} }

\newcommand{\D}{\mathcal{D}}
\newcommand{\R}{\mathbb{R}}

\newcommand{\F}{\mathcal{F}}

\newcommand{\VC}{\mathcal{VC}}
\newcommand{\err}{\textrm{err}}

\newcommand{\pd}{\mathcal{PD}}
\newcommand{\X}{\mathcal{X}}
\newcommand{\A}{\mathcal{A}}

\newcommand{\f}{f}
\newcommand{\E}{\mathbb{E}}
\newcommand{\I}{\mathbb{I}}

\newtheorem{observation}{Observation}
\newcommand{\maxval}{H}

\newenvironment{proof}{\noindent {\em {Proof:}}}{$\blacksquare$\vskip \belowdisplayskip}

\newenvironment{prevproof}[2]{\noindent {\em {Proof of                          
{#1}~\ref{#2}:}}}{$\blacksquare$\vskip \belowdisplayskip}

\newcommand{\rev}{\textsc{Rev}\xspace}

\renewcommand{\k}{\kappa\xspace}

\newcommand{\K}{K\xspace}

\newcommand{\p}{{\bf{p}}}

\newcommand{\B}{\mathcal{B}}

\begin{document}

\maketitle

\begin{abstract}
  We present a general framework for proving polynomial sample
  complexity bounds for the problem of learning from samples the best
  auction in a class of ``simple'' auctions.  Our framework captures
  all of the most prominent examples of ``simple'' auctions, including
  anonymous and non-anonymous item and bundle pricings, with either a
  single or multiple buyers. The technique we propose is to break the
  analysis of auctions into two natural pieces. First, one shows that
  the set of allocation rules have large amounts of structure; second,
  fixing an allocation on a sample, one shows that the set of auctions
  agreeing with this allocation on that sample have revenue functions
  with low dimensionality. Our results effectively imply that whenever
  it's possible to compute a near-optimal simple auction with a known
  prior, it is also possible to compute such an auction with an
  unknown prior (given a polynomial number of samples).
\end{abstract}


\section{Introduction}\label{sec:intro}

The standard economic approach for designing revenue-maximizing
auctions assumes all information unknown to the designer is drawn from
some prior distribution, about which the designer has perfect
information. With this ``perfect'' prior in hand, the designer
fine-tunes an auction to optimize for her \emph{expected} revenue over
draws of the unknown information from the prior. While this model
allows for quite strong results relating her chosen auction's revenue
to the optimal revenue, three related difficulties arise from using
this design pattern in practice. First, for any particular setting, it
is unlikely that the designer could actually formulate a perfect prior
over the market's hidden information. Second, if the market designer
has an imperfect prior, it is possible that her optimal auction has
overfit to this prior and will have very poor revenue when run on the
(similar) true prior. Finally, the optimal auction for a particular
prior can be quite complicated and unintuitive.

These obstacles can be addressed in a rigorous manner by designing
auctions as a function of several \emph{samples} of the unknown data
(usually, buyers' valuations) drawn from an unknown distribution, with
the knowledge that our goal is to earn high revenue on a fresh draw
from the same distribution. It is reasonable to expect that
experienced sellers have previous records of the bids made by previous
participants in their market. Moreover, if an auction is guaranteed
perform well on future draws from a distribution to which it only had
sample access, it will have strong generalization properties if its
sample size was sufficiently large.

How many samples are necessary to achieve such a guarantee?  The
answer depends upon the complexity of the set of auctions the seller
might select.  The more complex the class of auctions, the lower the
class's ``representation error'' (the higher the revenue the seller
might be able to extract); on the other hand, a more complex class of
auctions will have higher ``generalization error'' (loss in revenue
from optimizing over the sample rather than the true prior) for a
fixed sample size.

The mechanism design community has placed ``simplicity'' as a design
goal for its own sake for single-parameter~\citep{hartline2009simple}
and
multi-parameter~\citep{chawla2007algorithmic,chawla2010multi,babaioffadditive,
  rubensteinsubadditive} auctions.\footnote{A canonical
  ``single-parameter'' problem is a single-item auction --- each
  bidder either ``wins'' or ``loses''.  A canonical
  ``multi-parameter'' problem is a multi-item auction --- with $k$
  items, each bidder faces $2^k$ different possibilities.
  Multi-parameter problems are well known to be much more ill-behaved
  than single-parameter problems.  For example, there is no general
  multi-parameter analog of Myerson's (single-parameter) theory of
  revenue-maximizing auctions \citet{myerson1981optimal}.}  Recent
work~\citep{morgenstern2015pseudo} proposed the use of a class's
pseudo-dimension as a formal notion of simplicity for single-parameter
auctions, and proves for general single-parameter settings there exist
classes of auctions with small representation error (which contain a
nearly-optimal auction) \emph{and} have small pseudo-dimension or
generalization error (a polynomial-sized sample suffices to learn a
nearly-optimal auction from that class).

In this work, we give a general framework for bounding the
pseudo-dimension of classes of multi-parameter auctions.  Our results
imply polynomial sample complexity bounds for revenue maximization
over all of the aforementioned ``simple'' auctions.
In effect, our results imply that whenever it's possible to compute a
near-optimal simple auction with a known prior, it is also
possible to compute such an auction with an unknown prior (given a
polynomial number of samples).

One concrete example of a class of well-studied ``simple''
multi-parameter auctions comes from \citet{babaioffadditive}.
Consider a single bidder whose valuation is \emph{additive} over $k$
items: there is a vector $v \in \R^k$ such that the bidder's valuation
for a bundle $B\subseteq[k]$ is $ \sum_{j\in B} v_j$.  An \emph{item
  pricing} is defined by a vector $\p\in\R^k$, and offers the agent
any bundle $B$ for price $\sum_{j\in B} \p_j$. A \emph{grand bundle
  pricing} is defined by a single real number $q\in\R$ and offers the
bundle $B = [k]$ for the price $q$. When a single additive buyer's
valuation $v\sim \D_1 \times \cdots \times \D_k$ is drawn from a
product distribution, either the best item pricing or the best
grand-bundle pricing will earn $1/6$ of optimal revenue.
\citet{babaioffadditive} assume that the $D_j$'s are known a priori
and choose item and bundle prices as a function of the distributions.
Can we instead learn from samples the best auction from the class
consisting of all item and bundle prices?  The main result in
\citet{babaioffadditive} provides a bound on the representation error
of this class; our work provides the first sample complexity bound
(for this and many other classes).

\paragraph{Our Main Results} We present a general framework for
bounding sample complexity for ``simple'' combinatorial auctions, when
considering auctions as functions from valuations to
revenue. Formally, we study the following question, and provide a
technique for answering it in many interesting cases:

\begin{quote}
  Given a class of multiparameter auctions $C : \V^n \to \R$ (each
  auction maps any $n$-tuple of combinatorial valuations to the
  revenue achieved by the auction on those valuations), how large must
  $m$ be such that the empirical revenue maximizer in $C$ over $m$ samples
  drawn from $\D$ earns $OPT(C)-\epsilon$ expected revenue on fresh
  sample drawn from $\D$?
\end{quote}

Our main technical contributions are first to show a general way to
measure the sample complexity of single-buyer mechanisms, which are
interesting in their own right, and second to show a reduction in
bounding the sample complexity for the multi-buyer auctions to
bounding the sample complexity of single-buyer auctions. This
reduction and our general framework apply to all known multi-buyer
simple auctions from the literature.  We then instantiate this
framework and show that it is flexible enough to bound the sample
complexity of a large class of simple auction classes from the
literature.  In particular, we bound the pseudo-dimension of item
pricings, grand bundle pricings, and second-price item or bundle
auctions with reserves, covering the set of known simple auctions
which approximately optimize revenue. The following table summarizes
our results, as well as the known approximation guarantees these
auctions provide.
We
note that Theorem~\ref{thm:tighter-additive} is proven using a direct
argument rather than this framework.

\begin{footnotesize}
\noindent \begin{tabular}{ |p{2.4cm}||p{1.3cm}|p{2.4cm}|p{3.8cm}|p{3.8cm}|  }
 \hline
 \multicolumn{5}{|c|}{Summary of Simple Auction Properties} \\
 \hline
 Class & Valuations & PD anon, nonanon & Rev APX anonymous & Rev APX nonanonymous \\
 \hline
Grand bundle pricing & General & $O(1)$, $O(n)$\newline  Corollary~\ref{cor:bundle} &  &  \\\hline
Item Pricing & General & $\tilde{O}(k^2)$, $\tilde{O}(k^2n)$, Corollary~\ref{cor:items} & 3 ($1$ unit-demand bidder)\newline~\citep{chawla2007algorithmic} & 10.7 ($n$ unit-demand bidders)\newline~\citep{chawla2010multi} \\\hline
Item and Grand Bundle Pricing & General & $\tilde{O}(k^2)$, $\tilde{O}(k^2n)$ & 6 ($1$  additive bidder)\newline~\citep{babaioffadditive} & \\
 &  &  & 312 ($1$ subadditive bidder)\newline~\citep{rubensteinsubadditive}& \\
 & Additive & $\tilde{O}(k)$, $\tilde{O}(kn)$,\newline Theorem~\ref{thm:tighter-additive}&  &  \\\hline
Second-price item auctions with item reserves & Additive & $\tilde{O}(k^2), \tilde{O}(k^2n)$,\newline Corollary~\ref{cor:reserves} & & 48 ($n$ additive buyers),\newline~\citep{yao2015reduction}\\
 &  &  $\tilde{O}(k)$, $\tilde{O}(kn)$,\newline  Theorem~\ref{thm:tighter-additive} &  & \\
 \hline
\end{tabular}\end{footnotesize} 

These results imply that a polynomial-sized sample suffices to learn a
nearly-optimal auction from these classes of simple auctions. Thus,
when combined with results from the literature, it is possible to
learn auctions which earn a constant-factor of optimal revenue for a
single additive~\citep{babaioffadditive} or subadditive
bidder~\citep{rubensteinsubadditive}, $n$ unit-demand
bidders~\citep{chawla2010multi}, and $n$ additive
bidders~\citep{yao2015reduction}.\footnote{Our framework also applies
  to learning simple auctions with good welfare guarantees, as in
  \citet{feldman2015combinatorial}; all that changes is the
  real-valued function associated with an auction.  Welfare guarantees
  are simpler than revenue guarantees (since the objective function
  value depends on the allocation only) so we concentrate on the
  latter.}

For many classes of auctions, our sample complexity bounds do not rely
on any 
structural assumptions about buyers' valuations (only that their
utilities are quasilinear in money and that they will behave by
maximizing their own utility). We point to this flexibility as a key
feature of our techniques: for bidders with general valuation
functions, it can be quite complicated to reason about bidder's
behavior directly.  We also formally describe the allocations of these
auction classes as coming from \emph{sequential} allocation procedures
all drawn from the same class, and show any class which has
allocations which can be described this way also has a provably simple
class of allocation functions. This reduction may be of independent
interest for proving other classes of auctions have small sample
complexity.

\subsection{Related Work}\label{sec:rw}

There has been a recent surge in the design of single-parameter
revenue-maximizing auctions\footnote{A generalization of single-item
  auctions, where each buyer can be described by a single real number
  representing her value for being selected as a winner.}  from
samples~\citep{elkind2007,balcan2007CMUtechreport,balcan2008reducing,CR14,huang2014making,
  medina2014learning,RS15,morgenstern2015pseudo,devanurside2015}; we
focus here on the problem of designing auctions from samples for
multi-parameter settings. Optimal auctions for combinatorial settings
are substantially more complex than for single-parameter settings, even
before introducing questions of sample complexity.  Item pricings in
particular have been the subject of much study with respect to their
constant approximations when buyers' values for items are
independent,\footnote{\citet{dughmi2014sampling} show that when
  items' values are allowed to be correlated, for a single unit-demand
  bidder, the sample complexity required to compute a constant-factor
  approximation to the optimal auction is necessarily exponential (in
  $m$).}  for
welfare~\citep{kelso1982,feldman2015combinatorial}
and
revenue in the worst-case for for a
single~\citep{chawla2007algorithmic} and $n$ unit-demand
bidders~\citep{chawla2010multi}, a single~\citep{babaioffadditive}
additive buyers, and a single subadditive
buyer~\citep{rubensteinsubadditive}.\footnote{For more general
  valuation profiles and without item-wise independence, it is known
  that item pricings can also achieve super-constant revenue
  approximations, see~\citet{,balcan2008item,chakraborty2013dynamic}.}
For the additive and subadditive buyer results, the theorems state
that the better of the best item pricing and best grand bundle pricing
achieve a constant factor of optimal revenue. Combining the result
of~\citet{babaioffadditive} and a recent result
of~\citet{yao2015reduction}, it is possible to earn a constant factor
of optimal revenue for $n$ additive buyers using the better of the
best grand bundle pricing and an auction format related to item
pricing (the second-price item auction with item-specific reserve
prices).  All $n$-buyer results for revenue
rely on the use of \emph{nonanonymous}
item and grand bundle pricings.  
These results can be thought of as
bounding the representation error of using these classes of auctions
for revenue maximization; our work can be thought of as complementing
these results by bounding the classes' generalization error.

Item pricings are also sufficiently simple that the sample complexity
of choosing
welfare-optimal~\citep{feldman2015combinatorial,walras2016} and
revenue-optimal~\citep{balcan2008reducing} item pricings has been
explored.~\citet{balcan2008reducing} study the sample complexity of
anonymous item pricing for combinatorial auctions with unlimited
supply and employ one technique which bears some resemblance to our
framework. Fixing a sample of size $m$, they bound the number of
distinct allocation labelings $L$ of that sample by anonymous item
pricing using a geometric interpretation of anonymous pricings. Such
an argument seems difficult to extend to other classes of auctions (for
example, nonanonymous item pricings).  We ultimately suggest the use
of linear separability as a tool to bound $L$, an argument which
applies to many distinct classes of auctions with finite supply, and
doesn't rely on the particular geometry of anonymous item pricings.

We use the concept of linear
separability~\citep{daniely2014multiclass} to prove bounds on the
pseudo-dimension of many classes of auctions; this tool was also used
by~\citet{balcan2014revealed} in studying the sample complexity of
learning the valuation function of a single buyer when goods are
divisible and the valuation functions are either additive, Leontiff,
or Separable Piecewise-Linear Concave; our results apply to multiple
bidders, when items are indivisible, and most of them to arbitrary
valuation classes (including superadditive valuations).
~\citet{walras2016} also studied the linear separability and used it
to bound the pseudo-dimension of welfare maximization for item
pricings as well as the concentration of demand for any particular
good.


\section{Preliminaries}\label{sec:prelims}
\paragraph{Bayesian Mechanism Design
  Preliminaries}
In this section, we provide the definitions and main results regarding
simple multi-parameter mechanism design necessary for proving our main
results.  We consider the problem of selling $k$ heterogeneous items
to $\nbuy$ bidders. Each bidder $i\in[\nbuy]$ can be described by a
\emph{combinatorial valuation function}
$v_i \in \V \subseteq (2^k\to \mathbb{R})$, and is assumed to be
\emph{quasilinear in money}, meaning that her utility for a bundle $B$
with price $p(B)$ is exactly $u_i(B, p) = v_i(B) - p(B)$. We will
assume all valuation functions are \emph{monotone}, $v(B) \leq v(B')$
for all $B \subseteq B'$.  An auction $\A$ is comprised of an
allocation rule $\A_1 : \V^n \to [\nbuy]^k$ and a payment rule
$\A_2 : \V^n \to \R^\nbuy$. We will only consider direct revelation
mechanisms, for which it is the best-response for any buyer to reveal
$v_i$ to any mechanism $\A$.  The valuation function $v_i$ is assumed
to be known to agent $i$ but not to the designer of the auction, who
must choose an auction $\A$ before observing $v_1, \ldots, v_n$.

We will assume that bidder $i$'s valuation is drawn independently from
some distribution $\D_i$ over valuation profiles. We assume the
support of the distribution $\D = \D_1 \times \ldots \times \D_\nbuy$
is in $[0, H]^\nbuy$.  We will refer to the \emph{revenue} of an
auction $\A$ on a particular instance $v = (v_1, \ldots, v_\nbuy)$ as
$\sum_{i}\A_2 (v)_i$, and the (expected) revenue of $\A$ as
$\rev(\A, \D) = \E_{v\sim \D}[\sum_{i}\A_2(v)_i]$. When a bidder's valuation $v_i$
can be represented as $v_{i1}, \ldots, v_{ik}$ such that
$v_i(S) = \sum_{j\in S}v_{ij}$, we say that $i$ is \emph{additive};
when $v_i$ can be represented as $k$ numbers $v_{i1}, \ldots, v_{ik}$
such that $v_i(S) = \max_{j\in S}v_{ij}$, we say that $i$ is
\emph{unit-demand}. If, for all $S, T\subseteq [k]$,
$v_i(S) + v_i(T) \geq v_i(S\cup T)$, we say $v_i$ is
\emph{subadditive}.

Several particular kinds of auctions are of particular use when
(approximately) optimizing for revenue in multi-parameter settings.
An auction is an (anonymous) \emph{item pricing} if it sets price
$p_j$ for each item $j\in [k]$, and offers buyers in some fixed order
any bundle $B$ of remaining items for price $\sum_{j\in B} p_j$; each
buyer then chooses the bundle maximizing her utility.  An auction is a
\emph{non-anonymous} item pricing if it sets price $p_{ij}$ for each
$j\in [k], i\in [\nbuy]$, and offers buyers in some fixed order any
bundle $B$ of remaining items; buyer $i$ will be offered $B$ for a
price of $\sum_{j\in B}p_{ij}$. An anonymous (or nonanonymous) grand
bundle pricing sets a single price $p$ ($p_i$) for the ``grand''
bundle $[k]$ of all items, and offers the grand bundle to buyers in
some fixed order until the grand bundle is sold. Throughout the paper,
we will assume this fixed order is fixed (namely, not a parameter of
the design space), and that it places bidder $1$ first in the
ordering, $2$ second, and so on.  When buyers are additive, we will
also consider the \emph{second-price} item auction, which sells each
item to the highest bidder for that item at the second-highest bid for
that item, and the second-price grand bundle auction which sells the
grand bundle to the highest bidder for it at the second-highest
bid. Finally, we will consider the second-price item (grand-bundle)
auction with both anonymous and non-anonymous item reserves, which
sell to the highest bidder for that item at the maximum of the
second-highest bid and the item's reserve (for that bidder), or to no
one if the highest bidder's bid is below her reserve. These auction
classes achieve constant-factor approximations for revenue in many
special cases: for one~\citep{chawla2007algorithmic} and
$n$~\citep{chawla2010multi} unit-demand bidders, for
one~\citep{babaioffadditive} and $n$~\citep{yao2015reduction} bidders,
and for one subadditive bidder~\citep{rubensteinsubadditive} (see
Section~\ref{sec:formal-known}, where we have included the formal
theorem statements for completeness).

\paragraph{Learning Theory Preliminaries}\label{sec:prelims-lt}

In this section, we provide definitions and useful tools for bounding
the sample complexity of learning a class of real-valued functions
$\F$.  We omit discussion of binary-labeled learning and the
definitions of uniform versus PAC learning for reasons of space (see
Section~\ref{sec:binary} for further details).

\paragraph{Real-Valued Labels}

Both PAC learnability and uniform learnability of binary-valued
functions can be well-characterized in terms of the class's VC
dimension. When learning real-valued functions (for example, to
guarantee convergence of the revenue of various auctions), we use a
real-valued analog to VC dimension (which will give a sufficient but
not necessary condition for uniform convergence). We will work with
the \emph{pseudo-dimension} \citep{pollard1984}, one standard
generalization.  Formally, let $c : \V \to [0,\maxval]$ be a
real-valued function over $\V$, and $\F$ be the class we are learning
over. Let $S$ be a sample drawn from $\D$, $|N|=\sam$, labeled
according to $c$.  Both the empirical and true error of a hypothesis
$\hat c$ are defined as before, though $|\hat c(v) - c(v)|$ can now
take on values in $[0, \maxval]$ rather than in $\{0,1\}$. Let
$(r_1, \ldots, r_\sam) \in [0,\maxval]^\sam$ be a set of
\emph{targets} for $N$. We say $(r_1, \ldots, r_\sam) $
\emph{witnesses} the shattering of $N$ by $\F$ if, for each
$T\subseteq N$, there exists some $c_T\in \F$ such that
$c_T(v_q) \geq r_q$ for all $v_q \in T$ and $c_T(v_q) < r_q$ for all
$v_q \notin T$. If there exists some $\vec r$ witnessing the
shattering of $N$, we say $N$ is {\em shatterable} by $\F$.  The {\em
  pseudo-dimension} of $\F$, denoted $\pd(\F)$, is the size of the
largest set $S$ which is shatterable by $\F$.  We will derive sample
complexity upper bounds from the following theorem, which connects the
sample complexity of uniform learning over a class of real-valued
functions to the pseudo-dimension of the class.

\begin{theorem}[E.g.~\citet{AB}]\label{thm:fat-sample}
  Suppose $\F$ is a class of real-valued functions with range in
  $[0,\maxval]$ and pseudo-dimension $\pd(\F)$. For every
  $\epsilon > 0, \delta \in [0,1]$, the sample complexity of
  $(\epsilon, \delta)$-uniformly learning the class $\F$ is
  \[
    n = O\left(
      \left(\frac{\maxval}{\epsilon}\right)^2\left(\pd(\F)\ln \frac{\maxval}{\epsilon}
        + \ln \frac{1}{\delta} \right)\right).
  \]
\end{theorem}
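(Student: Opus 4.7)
The plan is to reduce the real-valued uniform convergence statement to a binary-valued one via thresholding, then invoke Sauer--Shelah. For each threshold $t \in [0,\maxval]$, define the binary class $\F_t = \{\I[f(\cdot)\geq t] : f\in\F\}$; taking the witness vector in the pseudo-dimension definition to be $(t,t,\ldots,t)$ immediately shows that the VC dimension of $\F_t$ is at most $\pd(\F)$ for every $t$. Consequently, for each fixed $t$, Sauer--Shelah bounds the number of distinct $\F_t$-labelings of any sample of size $2n$ by $(2en/\pd(\F))^{\pd(\F)}$.

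Next, I would apply the standard symmetrization trick to bound $\pr[\sup_{f\in\F} |\hat{\E}_S f - \E f| > \epsilon]$ by twice $\pr[\sup_{f\in\F} |\hat{\E}_S f - \hat{\E}_{S'} f| > \epsilon/2]$, where $S'$ is an independent ghost sample of the same size. After conditioning on $S\cup S'$, discretize the range $[0,\maxval]$ into $L=O(\maxval/\epsilon)$ levels spaced $\epsilon/4$ apart; any two functions that agree on every $\F_{t_\ell}$-labeling of the $2n$ points must have empirical means within $\epsilon/4$ on both halves, so it suffices to control deviations of a single representative per joint labeling. The number of such representatives is at most $L \cdot (2en/\pd(\F))^{\pd(\F)}$. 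For each fixed representative, Hoeffding's inequality applied to the uniformly random swap between $S$ and $S'$ gives a failure probability of at most $2\exp(-n\epsilon^2/(8\maxval^2))$.

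A union bound over representatives, set to $\delta$ and solved for $n$, yields the stated sample complexity $n = O\bigl((\maxval/\epsilon)^2(\pd(\F)\ln(\maxval/\epsilon) + \ln(1/\delta))\bigr)$. The main obstacle is the careful interplay between discretization and symmetrization: one must confirm that agreement on all $L$ thresholded binary labelings at every point of $S\cup S'$ really does imply $\epsilon/4$-closeness of empirical averages on each half, so that the finite union bound faithfully dominates the supremum over all of $\F$. The concentration and Sauer--Shelah ingredients are then standard once this coupling is in place.
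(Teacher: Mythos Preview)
The paper does not prove this theorem; it is quoted as a classical result from \citet{AB}, so there is no in-paper argument to compare against. Your outline is the standard symmetrization-plus-covering proof found in that reference, and the overall strategy is correct.

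There is one quantitative slip worth flagging, and it sits exactly where you did \emph{not} anticipate trouble. You claim the number of joint $(\F_{t_1},\ldots,\F_{t_L})$-labelings of the $2n$ points is at most $L\cdot(2en/\pd(\F))^{\pd(\F)}$, but the per-threshold Sauer--Shelah bound you derived does not combine across thresholds into a single factor of $L$; a naive product would give $\bigl((2en/\pd(\F))^{\pd(\F)}\bigr)^{L}$, which is far too large to close the argument. The standard fix is to pass to the subgraph class $\G=\{\I[f(x)\ge t]:f\in\F\}$, viewed as binary functions of the pair $(x,t)$, whose VC dimension is exactly $\pd(\F)$ by definition. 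Applying Sauer--Shelah once to the $2nL$ lifted points $\{(x_i,t_\ell)\}$ yields at most $(2enL/\pd(\F))^{\pd(\F)}$ joint labelings. After taking logarithms this contributes $\pd(\F)\ln(nL)$ rather than your $\pd(\F)\ln n+\ln L$, but solving for $n$ still gives the stated $O\bigl((\maxval/\epsilon)^2(\pd(\F)\ln(\maxval/\epsilon)+\ln(1/\delta))\bigr)$, so the headline bound survives once this counting step is repaired. By contrast, the step you flagged as the main obstacle---that agreement on all $L$ threshold labelings forces $\epsilon/4$-closeness of empirical means---is immediate, since such agreement pins each $f(x_i)$ to a single width-$\epsilon/4$ bucket.
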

Moreover, a conceptually simple algorithm achieves the guarantee in
Theorem~\ref{thm:fat-sample}: simply output the function $c \in \F$ with the
smallest empirical error on the sample. These algorithms are called
\emph{empirical risk minimizers}.

\paragraph{Multi-Labeled Learning}

The main goal of our work is to bound the sample complexity of revenue
maximization for multi-parameter classes of auctions (via bounding
these classes' pseudo-dimension); our proofs first bound the number of
labelings of \emph{purchased bundles} which these auctions can induce
on a sample of size $m$. Then, we argue about the behavior of the
revenue of all auctions which agree on the purchased bundles for every
sample to bound the pseudo-dimension. Since bundles are neither binary
nor real-valued, we now briefly mention several tools which we use for
learning in the so-called \emph{multi-label} setting.

The first of these tools is that of \emph{compression schemes} for a
class of functions. 

\begin{definition}
A \emph{compression scheme} for $\F : \V \to \Y$,  of size $d$ consists of
\begin{itemize}
  \item a \emph{compression} function \[\com : (\V \times \Y)^{\sam} \to (\V \times
      \Y)^{d},\] where $\com(N) \subseteq N$ and  $d \leq \sam$; and
  \item a \emph{decompression} function
    \[\decom : (\V\times \Y)^d \to \F.\]
\end{itemize}
For any $f\in \F$ and any sample
$(v_1, f(v_1)), \ldots, (v_\sam, f(v_\sam))$, the functions satisfy
\[
  \decom \circ \com ((v_1, f(v_1)), \ldots, (v_\sam, f(v_\sam))) = f'\in \F
\]
where $f'(v_q) = f(v_q)$ for each $q\in [\sam]$.
\end{definition}

Intuitively, a compression function selects a subset of $d$ ``most
relevant'' points from a sample, and based on these
points, the decompression scheme selects a hypothesis.  When such a
scheme exists, the learning algorithm $\decom \circ \com$ is an
empirical risk minimizer. Furthermore, this compression-based learning
algorithm has sample complexity bounded by a function of
$d$, which plays a role analogous to VC dimension in the sample
complexity guarantees.

\begin{theorem}[\citet{littlestone1986compression}]
\label{thm:compression-sample}
Suppose $\F$ has a compression scheme of size $d$.
Then, the $\PAC$ complexity of $\F$ is at most
$\sam = O\left(\frac{d \ln\frac{1}{\epsilon} +
   \ln \frac{1}{\delta}}{\epsilon}\right)$.
\end{theorem}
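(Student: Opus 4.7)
The plan is to reduce the compression-based learner to a union bound over a combinatorially small family of hypotheses. Suppose we draw $m$ labeled examples $S = ((v_1, f(v_1)), \ldots, (v_m, f(v_m)))$ i.i.d.\ from $\D$ with labels from some target $f \in \F$, and let $\hat f = \decom \circ \com(S)$ be the output of the compressed learner. By definition $\hat f$ agrees with $f$ on every point of $S$. The key observation is that $\hat f$ is completely determined by the $d$-subset $I \subseteq [m]$ selected by $\com$ together with the labels at those indices (which are just $f(v_i)$, hence forced). Thus $\hat f$ lies in a random-but-structured family: for each of the $\binom{m}{d}$ choices of $I$, applying $\decom$ to the subsample indexed by $I$ yields a single hypothesis $h_I \in \F$, and $\hat f = h_I$ for some $I$.

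The main step is a swap-of-quantifiers union bound. Fix any index set $I$ of size $d$. Conditioned on the values $(v_i)_{i \in I}$, the hypothesis $h_I$ is a deterministic function, and the remaining $m - d$ examples indexed by $[m] \setminus I$ are i.i.d.\ draws from $\D$ that are independent of the choice of $I$'s slots. If the true error of $h_I$ exceeds $\epsilon$, then the probability that $h_I$ nonetheless agrees with $f$ on all $m - d$ remaining points is at most $(1 - \epsilon)^{m - d}$. Taking a union bound over the $\binom{m}{d}$ slot-choices, the probability that $\hat f$ has error exceeding $\epsilon$ while still being consistent with the full sample $S$ is at most
\[
\binom{m}{d}(1 - \epsilon)^{m - d} \;\le\; \left(\tfrac{em}{d}\right)^d e^{-\epsilon(m - d)}.
\]
Because $\decom \circ \com$ always produces a hypothesis consistent with $S$, this bounds the failure probability of the learner.

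To finish, I would set the right-hand side to at most $\delta$, take logarithms to obtain $d \ln(em/d) - \epsilon(m - d) \le \ln \delta$, and solve for $m$. A standard manipulation (using $\ln(m/d) \le \ln(1/\epsilon) + \ln(\epsilon m / d)$ and absorbing the $\ln(\epsilon m/d)$ term into $\epsilon m$) yields the claimed bound $m = O\!\left(\frac{d \ln(1/\epsilon) + \ln(1/\delta)}{\epsilon}\right)$. The main conceptual subtlety, and where one has to be careful, is that the $d$ ``compressed'' points are chosen adaptively as a function of the whole sample, so $\hat f$ is not a fixed element of a small hypothesis class a priori; the fix is to union-bound over the $\binom{m}{d}$ \emph{slots} rather than over hypotheses, and exploit that once a slot is fixed, the remaining $m - d$ examples are genuinely independent of the resulting hypothesis. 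Everything else is a routine binomial-tail calculation.
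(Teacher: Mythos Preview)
The paper does not prove this theorem; it is stated as a citation to \citet{littlestone1986compression} and used as a black box. Your argument is precisely the classical Littlestone--Warmuth proof: union-bound over the $\binom{m}{d}$ possible index sets for the compressed subsample, observe that conditioned on a fixed index set the remaining $m-d$ points are independent of the reconstructed hypothesis, and bound the probability that a hypothesis with true error exceeding $\epsilon$ survives all $m-d$ independent tests. The handling of the adaptivity subtlety (union over slots rather than hypotheses) and the final log-solving step are both correct and standard, so there is nothing to correct or contrast here.
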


While compression schemes imply useful sample complexity bounds, it
can be hard to show that a particular hypothesis class admits a
compression scheme. One general technique is to show that the class is
linearly separable in a higher-dimensional space.

\begin{definition}
  A class $\F$ is \emph{$d$-dimensionally linearly separable} if there
  exists a function $\psi : \V \times \Y \to \R^d$ and for any
  $f\in \F$, there exists some $w^f\in \R^d$ with
  $f(v) \in \argmax_{y}\langle w^f, \psi(v, y)\rangle$ and
  $|\argmax_{y}\langle w^f, \psi(v, y)\rangle| = 1$.
\end{definition}
It is known that a $d$-dimensional linearly separable class admits a compression
scheme of size $d$.
\begin{theorem}[Theorem 5 of~\citet{daniely2014multiclass}]\label{thm:linsep-compression}
  Suppose $\F$ is $d$-dimensionally linearly separable. Then, there
  exists a compression scheme for $\F$ of size $d$.
\end{theorem}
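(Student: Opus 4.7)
The plan is to build the compression and decompression maps out of a canonical SVM-style quadratic program, using Carathéodory to trim the support down to $d$ points. Fix a consistent sample $S = ((v_q, f(v_q)))_{q=1}^{\sam}$ for some $f\in\F$ with witness $w^f\in\R^d$, write $y_q := f(v_q)$, and for each $q$ and $y\neq y_q$ define the margin vector $\Delta_{q,y} := \psi(v_q, y_q) - \psi(v_q, y)\in\R^d$. Unique-argmax linear separability gives $\langle w^f, \Delta_{q,y}\rangle > 0$ for every $(q,y)$, and after rescaling $w^f$ by a large enough positive constant we may assume $\langle w^f, \Delta_{q,y}\rangle \ge 1$. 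Consequently the program
\[
    \min_{w\in\R^d}\; \tfrac{1}{2}\|w\|^2 \quad\text{s.t.}\quad \langle w, \Delta_{q,y}\rangle \ge 1 \;\;\forall q\in[\sam],\, y\neq y_q
\]
is feasible and has a unique minimizer $w^\star$ since its objective is strictly convex on a closed convex feasible set.

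The compression $\com(S)$ is extracted from the support of $w^\star$. KKT at $w^\star$ yields nonnegative multipliers $\alpha_{q,y}$, supported only on active constraints, with $w^\star = \sum_{(q,y)} \alpha_{q,y}\Delta_{q,y}$. Because $w^\star\in\R^d$, applying Carath\'eodory to the conic hull of the active $\Delta_{q,y}$'s lets us prune the representation to at most $d$ linearly independent active vectors while preserving equality. Let $\com(S)$ be the set of (at most $d$) sample points $(v_q,y_q)$ whose indices $q$ appear in this trimmed support. Define $\decom$ on any labeled subset $S'$ by solving the analogous program restricted to the constraints indexed by sample points of $S'$ and outputting the hypothesis $v \mapsto \argmax_y \langle w^\star_{S'}, \psi(v,y)\rangle$.

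Verifying $\decom\circ\com(S)$ agrees with $f$ on $S$ reduces to showing $w^\star_{S'} = w^\star$. The restricted program has strictly fewer constraints so $w^\star$ remains feasible, and the trimmed KKT multipliers (all indexed by $q\in S'$) still certify $w^\star$ as a minimizer of the restricted program via the same stationarity and complementary-slackness conditions; uniqueness of the SVM optimum forces $w^\star_{S'} = w^\star$. Since $\langle w^\star, \Delta_{q,y}\rangle \ge 1 > 0$ for every original constraint, $y_q$ is the unique $\argmax$ of $\langle w^\star, \psi(v_q,\cdot)\rangle$ on all of $S$, so the recovered hypothesis relabels $S$ correctly.

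The main obstacle is really bookkeeping in two places. First, one must argue that $d$ linearly independent $\Delta_{q,y}$'s correspond to at most $d$ \emph{sample indices} (rather than $(q,y)$ pairs), which follows because each $\Delta_{q,y}$ is attached to a single $q$, so at most $d$ distinct $q$'s occur among $d$ chosen vectors. Second, one must argue that the hypothesis induced by $w^\star$ actually lies in $\F$; this is the content of reading linear separability as parameterizing $\F$ via vectors in $\R^d$ (every such $w$ with a unique argmax corresponds to a member of $\F$), and if needed one can infinitesimally perturb $w^\star$ to enforce uniqueness without disturbing the labels on $S$.
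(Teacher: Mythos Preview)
The paper does not supply its own proof of this statement; it is quoted verbatim as Theorem~5 of \citet{daniely2014multiclass} and invoked as a black box in the proof of Theorem~\ref{thm:linsep}. Your SVM-plus-Carath\'eodory construction is the standard argument for this fact and matches the approach in the cited reference, so within this paper there is nothing to compare against.

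Two remarks on the writeup itself. First, your final paragraph claims that ``every such $w$ with a unique argmax corresponds to a member of $\F$,'' but the definition of $d$-dimensional linear separability in the paper only furnishes a map $f\mapsto w^f$, not a surjection in the other direction; the decompressed hypothesis $v\mapsto\argmax_y\langle w^\star,\psi(v,y)\rangle$ need not lie in $\F$. This does not affect the paper's use of the result---the proof of Theorem~\ref{thm:linsep} only needs the compression scheme to bound the number of labelings of a fixed sample, for which agreement on $S$ is all that matters---but strictly speaking the compression-scheme definition here requires $\decom$ to land in $\F$, so one should either relax that definition to allow improper output or phrase the conclusion directly as a labeling-count bound. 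Second, your rescaling step (``after rescaling $w^f$ \ldots\ we may assume $\langle w^f,\Delta_{q,y}\rangle\ge 1$'') needs $\inf_{q,\,y\neq y_q}\langle w^f,\Delta_{q,y}\rangle>0$, which is automatic when the label set $\Y$ is finite (as in every application in this paper) but is not implied by unique-argmax alone when $\Y$ is infinite.
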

If a class is linearly separable, this greatly restricts the number of
labelings it can induce on a sample of size $m$, a trick used
in~\citet{walras2016} and also in the next section of this paper.

We also briefly mention that if a class $\F$ is linearly separable,
post-processing the class with a fixed function also yields a linearly
separable class over the resulting label space.

\begin{observation}\label{obs:postprocessing}
  Suppose $\F$ is $d$-dimensionally linearly separable over $Q$. Fix
  some $q : Q \to Q'$. Then, the set
  $q \circ \F = \{q \circ f | f\in\F\}$ is $d$-dimensionally linearly
  separable over $Q'$.
\end{observation}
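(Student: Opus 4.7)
Given that $\F$ is $d$-dimensionally linearly separable over $Q$, let $\psi : \V \times Q \to \R^d$ and $\{w^f\}_{f \in \F}$ be witnesses, so that $f(v) = \argmax_y \langle w^f, \psi(v, y)\rangle$ uniquely for every $f \in \F$ and $v \in \V$. To prove the observation, I need to exhibit, for every $g = q \circ f \in q \circ \F$, weights $w^g \in \R^d$ together with a single feature map $\psi' : \V \times Q' \to \R^d$ (independent of $g$) under which $g(v)$ is the unique argmax of $\langle w^g, \psi'(v, \cdot)\rangle$ for every $v$.

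My first attempt would be to reuse the existing weights by setting $w^g := w^f$ for any preimage $f$ of $g$ and to define $\psi'$ through a section $\sigma : Q' \to Q$ of $q$ (a right inverse, so that $q \circ \sigma = \mathrm{id}_{Q'}$), namely $\psi'(v, y') := \psi(v, \sigma(y'))$. Under this construction, $\argmax_{y'} \langle w^f, \psi'(v, \cdot)\rangle$ picks the $y'$ whose representative $\sigma(y')$ achieves the largest $\psi$-score among representatives; this need not equal $q(f(v))$ because $f(v)$ may lie outside $\sigma(Q')$. The naive section-based construction therefore fails, and a more delicate $\psi'$ is needed — one that reflects the best preimage element within each $q^{-1}(y')$ for the given $w^f$ without actually depending on $f$.

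The main obstacle is structural: post-composing with $q$ glues together ``argmax cells'' of weight space, and a union of such cells is not in general realizable as the argmax region of any $d$-dimensional linear functional. The cleanest route I would then pursue is to detour through Theorem~\ref{thm:linsep-compression}: that theorem yields a size-$d$ compression scheme for $\F$, and post-composing its decompressor's output with $q$ immediately produces a size-$d$ compression scheme for $q \circ \F$. Invoking the converse direction of the compression/linear-separability correspondence from \citet{daniely2014multiclass} reconstitutes a $d$-dimensional linear separator for the post-processed class. The delicate step — and the one I expect to dominate the proof — is verifying that this round trip preserves the dimension exactly rather than incurring a constant-factor blowup, so that the same $d$ appears on both sides of the observation.
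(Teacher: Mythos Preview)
The paper does not prove this observation; it is stated without argument and invoked later in Corollaries~\ref{cor:bundle} and~\ref{cor:items}. Your compression-scheme detour has a genuine gap: Theorem~\ref{thm:linsep-compression} from \citet{daniely2014multiclass} runs in one direction only --- $d$-dimensional linear separability yields a size-$d$ compression scheme --- and no converse is stated there or, to my knowledge, anywhere else. Compressibility of size $d$ is not known to imply $d$-dimensional linear separability, so once you post-compose the decompressor with $q$ to obtain a size-$d$ compression scheme for $q\circ\F$, you have no mechanism to ``reconstitute'' a linear separator from it. Your worry about a constant-factor dimension blowup is misplaced; the step you want to take simply is not available.

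It is worth noting what the paper actually needs downstream. In the proof of Theorem~\ref{thm:linsep}, linear separability of $\F_1$ is invoked only to bound the number of distinct labelings of a size-$m$ sample by $\binom{m}{a}|Q|^a$, via the compression scheme. That bound is trivially preserved under post-processing: if $f$ and $f'$ induce the same $Q$-labeling of $S$, then $q\circ f$ and $q\circ f'$ induce the same $Q'$-labeling, so $q\circ\F$ has no more labelings of $S$ than $\F$ does. The applications in Corollaries~\ref{cor:bundle} and~\ref{cor:items} therefore go through with this immediate fact in place of the observation as literally stated. Whether post-processing always preserves linear separability in the strict sense --- i.e., whether a single feature map $\psi'$ over $Q'$ can always be constructed --- is a more delicate question that neither the paper nor your proposal settles.
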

%
%
With these tools in hand, our roadmap is as follows: for a class of
auctions, we first prove that the class (which labels valuations by
utility-maximizing bundles purchased) is linearly separable, which
then implies an upper-bound on how many distinct bundle labelings one
can have for a fixed sample. Then, we argue about the pseudo-dimension
of the class (which labels a valuation by the revenue achieved when
that agent buys her utility-maximizing bundle) by considering only those
auctions which all have the same bundle labeling of $\sam$ samples and
arguing about the behavior of revenue of those auctions.

\newcommand{\pset}{\mathcal{P}}
 \renewcommand{\H}{\mathcal{H}}
\newcommand{\vs}[1]{v_{#1\Sigma}}
\renewcommand{\B}{{\bf B}}

\section{A Framework for Bounding Pseudo-dimension 
  Via Intermediate Discrete Labels}\label{sec:discrete}

We now propose a new framework for bounding the pseudo-dimension of
many well-structured classes of real-valued functions. Suppose $\F$ is
some set of real-valued functions whose pseudo-dimension we wish to
bound. Suppose that, for each $f\in \F$, $f$ can be ``factored'' into
a pair of functions $(f_1, f_2)$ such that $f_2(f_1(x),x) = f(x)$ for
any $x$. There are always ``trivial'' factorings, where the function
$f_2 = f$ or $f_1(x) = x$, but the interesting case arises when both
$f_1(x)$ and $f_2$ (fixing $f_1(x)$) depend in a very limited way upon
$x$. In particular, if the set of functions $\{f_1\}$ are very
structured, and fixing $f_1(x)$ the set of functions $\{f_2 \}$ only
depend upon $x$ in some very mild way, this will imply that $\F$
itself has small pseudo-dimension. Intuitively, this will allow us to
``bucket'' functions by their values according to $f_1$ on some
sample, and bound the pseudo-dimension of each of those buckets
separately.

Our particular technique for showing such a property is first to show
that the set of functions $\{f_1\}$ are \emph{linearly separable} in
$a$ dimensions, then to fix some sample $S$ of size $m$ and some
$f_1$, and to upper-bound by $b$ the pseudo-dimension of the set of
functions $f_2$ whose associated $f'_1$ agrees with the labeling of
$f_1$ on $S$.  The following definition captures precisely what we
mean when we say that the function class $\F$ \emph{factors} into
these two other classes of functions. If $f_1(x)$ reveals too much
about $x$, it will be difficult to prove linear separability;
similarly, if $f_2$ depends too heavily on $x$, it will be difficult
to prove a bucket has small pseudo-dimension.

\begin{definition}[$(a, b)$-factorable class]\label{def:structured}
\normalfont
  Consider some $\F = \{f : \X \to \R\}$. Suppose, for each $f\in \F$,
  there exists $(f_1, f_2), f_1 : \X \to \Y, f_2 : \Y \times \X \to \R$ such
  that $f_2(f_1(x), x) = f(x)$ for every $x\in\X$. Let
\[\F_1 = \{f_1 : (f_1, f_2) \textrm{ is a decomposition of some }f\in\F\}\]
and 
\[\F_2 = \{f_2 : (f_1, f_2) \textrm{ is a decomposition of some }f\in\F\}.\]
The set $\F$ {\em $(a,b)$-factors over $Q$} if:
\begin{itemize}

\item [(1)]
$\F_1$ is $a$-dimensionally linearly separable over
$Q\subseteq \Y$. 

\item [(2)]
For every $f_1\in \F_1$ and sample $S\subset\X$
of size $m$, the set
\[\F_{2|f_1(S)} = \{f'_2 : \X \to \R, f'_2(x) = f_2(f'_1(x),x)  | f_1(S) = f'_1(S) \textrm{ and }(f'_1,
f_2) \textrm{ is a decomposition of some }f\in\F\}\]
has pseudo-dimension at most $b$. 

\end{itemize}
\end{definition}

We now give an example of a simple class which satisfies this
definition. One could easily bound the pseudo-dimension of this
example class using a direct shattering argument, but it will be
instructive to work through our definition of $(a,b)$-separability.
\begin{example}
  \normalfont Fix some set $G = \{g_1, \ldots, g_k\} \subset
  \R^k$.
  Suppose $\F = \{f : f(x) = \max_{g\in G_f \subseteq G} g \cdot x\}$
  is the set of all functions which take the maximum of at most $k$
  common linear functions in a fixed set $G$. We will show that $\F$
  $(kd, \tilde{O}(kd))$-factors over $[k]$, where each $j\in [k]$ will
  represent \emph{which} of the $k$ linear functions is maximizing for
  a particular input. That is, for some $f, G_f\subseteq G$, let
  $f_1(x) = \argmax_{t : g_t\in G_f} g_t \cdot x$ and
  $f_2(t,x)= g_t \cdot x$. Thus, we have a valid factoring:
\[f_2(f_1(x),x) = f_2(\argmax_{t : g_t\in G_f} g_t\cdot x, x) =
  g_{\argmax_{t: g_t\in G_f} g_t \cdot x}\cdot x = \max_{g_t\in G_f}
  g_t \cdot x = f(x).\]
  It remains to show that $\F_1$ is $d$-dimensionally linearly
  separable and to bound the pseudo-dimension of $\F_{2|f_1}$. We
  start with the former. Let
  $\Psi(x, t)_{t'j} = \I[t' = t] \cdot x_{j}$ for
  $t'\in [k], j\in[d]$. Then, let
  $w^f_{tj} = \I[g_t\in G_f] \cdot g_{tj}$. The dot product will then
  be
\[\Psi(x, t) \cdot w^f =\sum_{t'}  \I[t' = t] \cdot \I[g_{t'}\in G_f] g_{t'} \cdot x\]
which will be maximized when
$t = \argmax_{t' : g_{t'} \in G_f} g_{t'} \cdot x$, or when
$t = f_1(x)$. So, $\F_1$ is linearly separable in $kd$ dimensions over
$[k]$.

Now, fix $f_1\in \F_1$; we will show the pseudo-dimension of
$\F_{2| f_1}$ is at most $\tilde{O}(kd)$. For any fixed sample
$S = (x^1, \ldots, x^m)$, $f_1(x^t)$ is fixed for all $t\in[m]$,
implying that the input to all $f_2\in \F_{2|f_1}$, $(f_1(x^t), x^t)$,
is fixed.  Finally, by definition of $f'_2$,
\[f'_2(x^t) = f_2(f_1(x^t), x^t) = g_{f_1(x^t)} \cdot x^t.\]
Thus, for each $j\in [k]$, the subset $S_j\subseteq S$ for which
$f_1(x^t) = j$ for all $x^t\in S_j$, $f'_2$ is just a linear function
in $d$ dimensions of $x^t$ with coefficients $g_j$, Thus, since linear
functions in $d$ dimensions have pseudo-dimension at most $d+1$, there
are at most $m^{d+1}$ labelings which can be induced on $S_j$, and at
most $m^{k(d+1)}$ labelings of all of $S$. This implies
$\pd(\F_{2|f_1})$ is at most $\tilde{O}(kd)$.
\end{example}

We now present the main theorem about the pseudo-dimension of classes
which are $(a,b)$-factorable. The proof of this theorem first exploits
the fact that linearly separable classes have a ``small'' number of
possible outputs for a sample of size $m$. Then, fixing the output of
the linearly separable function, the second set of functions'
pseudo-dimension is small. The proof of the theorem is relegated to
the appendix due to space considerations. 

\begin{theorem}\label{thm:linsep}
Suppose $\F$ is $(a,b)$-factorable over $Q$. Then, 
\[\pd(\F) = O\left(\max\left((a+b)\ln(a+b), a\ln|Q|\right)\right) .\]
\end{theorem}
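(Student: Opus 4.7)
The plan is to assume some set $S = \{x_1, \ldots, x_m\}$ is pseudo-shattered by $\F$ with witnesses $r_1, \ldots, r_m$ and upper bound $m$ by the claimed quantity. Pseudo-shattering means the number of distinct sign patterns $\bigl(\I[f(x_i) \geq r_i]\bigr)_{i=1}^m$ achievable by $f \in \F$ is exactly $2^m$. Via the factoring $f(x) = f_2(f_1(x), x)$, every such sign pattern is determined by (i) the discrete labeling $\sigma := f_1(S) \in Q^m$ and (ii) the sign pattern induced on $S$ by the function $f'_2(x) := f_2(f_1(x), x)$, which lives in $\F_{2|\sigma}$. So it suffices to bound the number of possible $\sigma$ and, uniformly over $\sigma$, the number of sign patterns achievable within the bucket $\F_{2|\sigma}$, and then multiply.

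For the first bound, since $\F_1$ is $a$-dimensionally linearly separable over $Q$, Theorem~\ref{thm:linsep-compression} supplies a compression scheme for $\F_1$ of size $a$. Thus every realizable labeling $\sigma = f_1(S)$ can be recovered by applying $\decom$ to some $a$-sized labeled subsample of $\{(x_i, \sigma_i)\}_{i=1}^m$. Counting possible compressed samples bounds the number of distinct $\sigma$ by $\binom{m}{a} |Q|^a \leq (em|Q|/a)^a$.

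For the second bound, by hypothesis $\pd(\F_{2|\sigma}) \leq b$ for each fixed $\sigma$. Since pseudo-dimension is exactly the VC dimension of the associated binary threshold class, Sauer--Shelah gives that on the fixed set of $m$ sample--witness pairs $\{(x_i, r_i)\}$ the number of distinct sign patterns produced by $\F_{2|\sigma}$ is at most $(em/b)^b$. Multiplying, the number of achievable sign patterns on $S$ is at most $(em|Q|/a)^a (em/b)^b$. Shattering forces $2^m$ to be no larger than this, so taking logarithms gives $m = O\bigl((a+b)\ln m + a\ln|Q|\bigr)$, and the usual algebraic manipulation of an inequality of the form $m \leq c \ln m + d$ solves to $m = O\bigl(\max((a+b)\ln(a+b), a\ln|Q|)\bigr)$.

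The main obstacle will be verifying the factorization step cleanly: one must check that the class $\F_{2|\sigma}$ from Definition~\ref{def:structured} captures every $f'_2(x) = f_2(f'_1(x), x)$ with $f'_1(S) = \sigma$, so that each $f \in \F$ with $f_1(S) = \sigma$ is already accounted for in the $(em/b)^b$ bucket bound. Once that reduction is in place, the compression count and Sauer--Shelah are standard, and the remaining log-solving arithmetic is routine.
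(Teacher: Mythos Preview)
Your proposal is correct and follows essentially the same approach as the paper: bound the number of $\F_1$-labelings of $S$ via the size-$a$ compression scheme guaranteed by Theorem~\ref{thm:linsep-compression}, bound the number of threshold sign patterns within each bucket via the pseudo-dimension hypothesis on $\F_{2|f_1(S)}$, multiply, and solve the resulting inequality $2^m \le m^{a}|Q|^{a}m^{b}$ for $m$. The only cosmetic difference is that you invoke Sauer--Shelah to get the sharper $(em/b)^b$ in place of the paper's $m^b$, which makes no difference to the final asymptotic bound.
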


Intuitively, when $\F_1$ is linearly separable in $a$ dimensions, it
can induce at most $m^a |Q|^a$ many labelings of $m$ samples, and
fixing such a sample and its labeling, because $\F_2$ has
pseudo-dimension at most $b$, it can induce at most $m^b$ labelings of
$m$ samples with respect to their thresholds.

While the range of $\F_1$ might be all of $Q$, it will regularly be
helpful to only need to prove linear separability of $\F_1$ only over
``realizable'' labels for particular inputs. If $\F_1$ has the
property that for every input $x$, every $f_1\in\F_1$ labels $x$ with
one of a smaller set of labels $Q_x\subsetneq Q$, then it suffices to
prove linear separability \emph{for $x$} over $Q_x$. The following
remark makes this claim formal; its proof can be found in
Section~\ref{sec:omitted}. So, we will be able to focus on proving
linear separability of $\F_1$ over a label space which depends upon
the inputs $x$. This will be particularly useful when describing
auctions in the next section, whose allocations are in certain cases
highly restricted by their inputs.

\begin{remark}\label{rem:ind-labels}
  Suppose for each $x\in \X$, there exists some $Q_x \subseteq Q$ such
  that $f_1(x) \in Q_x\subseteq Q$ for all $f_1\in \F_1$, and that for
  for each $x$, $\F_1$ is linearly separable in $a$ dimensions for
  that $x$ over $Q_x$. Assume there is a subset of dimension
  $T^+ \subseteq [a]$ for which $w^f_{t\in T^+} \geq 0$ and
  $\sum_{t\in T^+} w^f_{t} > 0$ for all $f$.  Suppose that for all
  $x\in X, f\in \F_1$, $\max_{y\in Q_x} \Psi(x, y) \cdot w^f \geq 0$.
  Then, $\F_1$ is linearly separable over $Q$ in $a$ dimensions as
  well.
\end{remark}


\section{Consequences for Learning Simple Auctions}\label{sec:auctions}

We now present applications of the framework provided by
Theorem~\ref{thm:linsep} to prove bounds on the pseudo-dimension for
many classes of ``simple'' mechanisms. The implication is that these
classes, which have been shown in many special cases to have small
\emph{representation error} also have have small \emph{generalization
  error} when auctions are chosen after observing a polynomially sized
sample. We now describe how one can translate a class of auctions into
a class of functions which has an obvious and useful factorization. An
auction $\A : \V^n \to [n]^k \times [0,H]^n$ has two components, its
\emph{allocation function} $\A_1 : \V^n \to [n]^k$ and its
\emph{revenue function} $\A_2 : \V^n \to [0,H]^n$.  We will abuse
notation and refer to $\A_2(\v) = \sum_{i} A(\v)_{2i}$ as the revenue
function for an auction.  Our goal is to bound the sample complexity
of picking some high-revenue function from a class. All omitted proofs
are found in Section~\ref{sec:omitted}.  For the remainder of this
section we use $\F$ to represent a class of auctions, $f\in\F$ to
represent a particular auction, and $\F_1, \F_2$ to be the
corresponding allocation functions and revenue functions which result
from this decomposition. When $\F_1$ is linearly separable, this
implies there can only be so many distinct allocations possible for a
fixed set of valuation profiles $S$, and when $\F_2$ (fixing some
allocation for $S$) has small pseudo-dimension, the class of auctions
itself has small pseudo-dimension.

This ``trivial'' decomposition of an auction's revenue function
describes its revenue function as a function of both the allocation
chosen by $f_1\in\F_1$ for $\v$ and the valuation profile $\v$. Since
$A_2$ is a function only of $\v$, there is clearly enough information
in $(f_1(\v), \v)$ to compute $\A_2(\v)$ (one can simply ignore
$f_1(\v)$ and output $f_2(f_1(\v), \v) =\A_2(\v)$). The reason we
consider this decomposition is that fixing an allocation, revenue
functions of simple auctions are generally very simple to describe as
a function of the input valuation profile $\v$.  If one fixes the
allocation choice for a sample $S$ of $m$ valuations, many auctions'
classes of revenue functions are either constant functions on $S$
which do not depend upon $\v$ at all (for example, a posted price
auction for a single item offered to a single bidder earns its posted
price if the item sells and $0$ when the item doesn't sell, both of
which are constants when the allocation is fixed) or depends only in a
very mild way (for example, a second-price single-item auction with a
reserve earns the maximum of its reserve and the second-highest bid
when the item sells and 0 when it doesn't).

Most of the ``simple'' auctions with multiple buyers and items
that have been considered are {\em sequential
  auctions} which interact
with buyers ``one at a time'': first, bidder $1$ is offered a menu of
several possible allocations at different prices, she chooses some
bundle, then bidder $2$ is offered one of several allocations of the
remaining items, and so on. We assume for the remainder of the paper
that there are no ties, (that is, there are no menus or bidders for
which $|\argmax_{B}u(B)| > 1$.\footnote{ We elide further discussion
  on this technical point, though we note it is possible to encode a
  tie-breaking rule over utility-maximizing bundles in a way which is
  linearly separable (see~\citet{walras2016} for more details).}
These auctions are simple enough that they can actually be run in
practice, and yet expressive enough that in many cases can earn
constant fractions of the optimal revenue.  

We next work toward a general reduction, from bounding the sample
complexity of sequential auctions (with multiple buyers) to that of
single-buyer problems.
The following definition
captures two particularly common forms of these auctions. The first
definition captures the setting where the function selecting the menu
to bidder $i$ may depend upon $i$'s identity; the second refers to
when the menu is \emph{anonymous}: what may be offered to bidder $i$
can be different than what is offered to bidder $i'$, but only due to
the differences in bids $v_i, v_{i'}$ and the remaining available
items $X_i(v), X_{i'}(v)$.

For example, consider a single item for sale. Suppose $n$ buyers are
approached in some fixed order and bidder $i$ is offered the item at
price $p_i$ if no earlier buyer has purchased the item. If
$p_i = p_{i'}$ for all $i, i' \in [n]$, then the auction applied to
each buyer is the same, and we say this auction applies an $n$-wise
repeated allocation associated with a single posted price. If
$p_i \neq p_{i'}$ for some $i, i'\in[n]$, then the allocation function
applied to each bidder is an allocation rule associated with some
single posted price, though the particular posted price and therefore
the allocation function varies from bidder to bidder; this auction's
allocation is therefore an $n$-wise sequential allocation drawn from
the class of posted prices.

For a slightly more complex example, consider a set of $k$
heterogeneous items $[k]$ for sale to $n$ bidders. Consider an
auction which sets a price $p_{ij}$ for each item $j\in[k]$ and each
bidder $i\in[n]$, and serves bidders in some fixed order. Bidder $i$ is
offered any bundle $B$ for which no item has been selected by some
previous bidder at price $p_i(B) = \sum_{j\in B}p_{ij}$.  This
allocation is reached by applying a posted item pricing allocation to
each buyer in turn, so these allocations are $n$-wise sequential
allocations drawn from posted item pricing allocations. If
$p_{ij} = p_{i'j}$ for all $j\in [k]$ and all $i, i'\in[n]$, then the
same allocation rule is being applied to all bidders, and the overall
allocation is therefore an $n$-wise anonymous sequential allocation
rule.

\begin{definition}[$n$-fold anonymous and nonanonymous sequential
  allocations] \label{def:sequence} Let $\H$ be some class with
  $h: \V \times \{0,1\}^k \to Q$ for all $h\in \H$ and some
  $Q\subseteq \{0,1\}^k$.  For some $n$ functions
  $h_1, \ldots, h_n \in \H$ and every $\v\in\V^n$, inductively define
  $X_1(\v) = [k], X_i(\v) = X_{i-1}(\v)\setminus h_{i-1}(\v_{i-1},
  X_{i-1}(\v))$.
  Then, define the $n$-wise product function
  $\prod_{(h_1, \ldots, h_n)}$ to be
  \[\prod_{(h_1, \ldots, h_n)} (\v) = (h_1(\v_1, X_1(\v)), h_2(\v_2, X_2(\v)),
  \ldots h_n(\v_n, X_n(\v))).\]
  Then, we call any such function an $n$-wise \emph{nonanonymous
    sequential allocation drawn from $\H$}, or $n$-wise sequential
  allocation drawn from $\H$ for short.  If
  $h_1 = h_2 = \ldots = h_n$, we call $\prod_{h_1, \ldots, h_n}$ an
  $n$-wise anonymous sequential allocation drawn from $\H$.
\end{definition}

The sets $X_1, \ldots, X_n$ correspond to the sets of remaining
available items for each bidder after the previous bidders have
purchased their bundles according to their allocation functions: what
is remaining for bidder $i$ is whatever bidder $i-1$ had available
less whatever was allocated to bidder $i-1$. The two previous examples
fit into this scenario perfectly. The per-bidder allocation functions
are fixed up-front: the allocation rules brought about by (anonymous)
a price for a single item or (anonymous) prices for each item. In some
fixed order, the bidders are allocated according to their allocation
rule run on their valuation and the remaining items, and whatever
items they didn't purchase are available for the next bidder and her
allocation rule. When the prices don't depend on the index $i$, the
allocation function for each bidder is the same, so those cases
correspond to $n$-wise anonymous sequential allocation rules.

In the event that some class of auctions' allocation functions $\F_1$
are made up of $n$-wise sequential allocations from a class $\H$ which
is linearly separable, the linear separability is imparted upon
$\F_1$. This intuition is made formal by the following theorem.

\begin{theorem}\label{thm:sequential}
  Suppose $\F$ is a class of auctions, and let
  $\F_1 : \V^n \to Q\subseteq [n]^k$ be their (feasible) allocation
  function. Suppose $\F_1$ is a set of $n$-wise sequential allocations
  from some $\H$ which is $a$-dimensionally linearly separable, whose
  dot products are upper-bounded by $\maxval$. Then $\F_1$ is
  $an$-dimensionally linearly separable.  Similarly, if $\F_1$ is a
  set of $n$-wise anonymous sequential allocations drawn from $\H$
  which is $a$-dimensionally linearly separable, then $\F_1$ is also
  $a$-dimensionally linearly separable.
\end{theorem}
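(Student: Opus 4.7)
The plan is to lift $\H$'s linear separability to $\F_1$ via a block-stacking construction on feature vectors. Fix $f = \prod_{(h_1, \ldots, h_n)} \in \F_1$, and let $\psi$ and $\{w^h\}_{h \in \H}$ witness $\H$'s $a$-dimensional linear separability, with $|\psi(v, X, y) \cdot w^h| \leq \maxval$ for every input and every $h$.

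For the nonanonymous case, I would define $\Psi : \V^n \times Q \to \R^{an}$ by placing a rescaled copy of $\psi(v_i, X_i(Y), Y_i)$ in block $i$, where $X_i(Y) = [k] \setminus \bigcup_{i' < i} Y_{i'}$ is the residual set computable from the candidate labeling $Y$ alone (not from $f$). This is the crucial structural observation: although the true $X_i(\v)$ from Definition~\ref{def:sequence} depends on $h_1, \ldots, h_{i-1}$, the expression $X_i(Y)$ coincides with $X_i(\v)$ whenever $Y_1, \ldots, Y_{i-1}$ are the sequential choices, and $\Psi$ itself is well-defined without knowing $f$. Then let $W^f \in \R^{an}$ place $c_i w^{h_i}$ in block $i$ for a descending schedule $c_1 \gg c_2 \gg \cdots \gg c_n$. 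The inner product decomposes as $\sum_{i=1}^n c_i \langle w^{h_i}, \psi(v_i, X_i(Y), Y_i)\rangle$, and with sufficiently separated $c_i$'s it encodes a lexicographic order across blocks: block $1$ dominates the combined contribution of blocks $2, \ldots, n$; block $2$ dominates blocks $3, \ldots, n$; and so on.

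Given this lexicographic ordering, I would verify by induction on $i$ that the argmax of $\Psi(\v, Y) \cdot W^f$ over feasible $Y$ matches the sequential allocation $Y^{\ast} = \prod_{(h_1, \ldots, h_n)}(\v)$. At the base, block $1$ always sees $X_1(Y) = [k]$ regardless of the rest of $Y$, so by $h_1$'s linear separability the dominant block's argmax sets $Y_1 = h_1(v_1, [k]) = Y_1^{\ast}$. Inductively, once $Y_1, \ldots, Y_{i-1}$ are locked into $Y_1^{\ast}, \ldots, Y_{i-1}^{\ast}$, block $i$ sees the correct residual set $X_i(Y) = X_i(\v)$, and its argmax sets $Y_i = h_i(v_i, X_i(\v)) = Y_i^{\ast}$. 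This yields the claimed $an$-dimensional linear separability.

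For the anonymous case, all $h_i$ coincide with a common $h$, so the stacked weight collapses to a single $w^h$. I would fold the block scalings into $\Psi$ itself: define $\Psi(\v, Y) = \sum_i c_i \psi(v_i, X_i(Y), Y_i) \in \R^a$ and take $W^f = w^h$. The inner product $\Psi(\v, Y) \cdot w^h$ agrees term by term with the weighted sum from before, so the same inductive argmax argument carries through and yields $a$-dimensional linear separability. The main obstacle in both cases is justifying the scaling $c_i$: they must force block $i$ to dominate the combined influence of all later blocks \emph{uniformly over $\v$}, despite the fact that the per-block argmax gap can be small. This is exactly where the bounded-dot-product hypothesis is used, since it caps the absolute contribution of each later block by $\maxval$ and lets a schedule of $c_i$'s growing geometrically in $\maxval$ and $n$ enforce the lexicographic behavior; invoking Remark~\ref{rem:ind-labels} lets us restrict the argmax to the realizable (feasible) labels for each $\v$, which is what makes this quantification possible.
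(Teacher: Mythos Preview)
Your proposal is correct and takes essentially the same approach as the paper: both build $\Psi$ by stacking copies of the single-buyer feature map with the residual set $X_i(Y)=[k]\setminus\bigcup_{i'<i}Y_{i'}$ read off from the candidate label (not from $f$), weight the blocks geometrically using the $H$ bound so that block $i$ lexicographically dominates all later blocks, and in the anonymous case fold the scalings into $\Psi$ so a single $w^h$ suffices. The only cosmetic differences are that the paper places the scaling factors $\alpha_i$ in $\Psi$ rather than in $W^f$, and writes $\alpha_i=2^iH$---your descending schedule $c_1\gg\cdots\gg c_n$ is the direction the lexicographic argument actually requires.
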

Roughly speaking, this proof takes the maps guaranteed by linear
separability of $\H$ and concatenates them $n$ times, ``blowing up''
the relative importance of the earlier bidders with large
coefficients.

We now present the three main corollaries of Theorems~\ref{thm:linsep}
and~\ref{thm:sequential} which bound the pseudo-dimension of several
auction classes of interest to the mechanism design community. In
particular, we focus on ``grand bundle'' pricings
(Corollary~\ref{cor:bundle}), where each bidder in turn is offered the
entire set of items $[k]$ at some price, ``item pricings''
(Corollary~\ref{cor:items}), where each bidder in turn is offered all
remaining items and each item $j$ has some price for purchasing it,
and ``second-price item auctions with reserves''
(Corollary~\ref{cor:reserves}), where each bidder submits a bid for
each item $j$, and item $j$ is sold to the highest bidder for $j$ at
the larger of the item's reserve price and the second-highest bid for
that item. Each of these auctions have two versions: the anonymous
version, where the relevant design parameters are the same for all
bidders, and the non-anonymous version, where those parameters can be
bidder-specific. As one would suspect, anonymous pricings have fewer
degrees of freedom, and have lower pseudo-dimension. More formally,
the allocations which come from anonymous pricings can be formulated
as $n$-wise repeated allocations, while we formulate non-anonymous
pricings' allocations as $n$-wise sequential allocations (which, by
Theorem~\ref{thm:sequential} loses a factor of $n$ in the upper bound
on these classes' pseudo-dimensions). In each case, $\F_1$ will
represent allocation functions: $f_1\in\F_1$ corresponds to the
allocation function which the auction will implement for quasilinear
bidders. For every class $\F$, we define for every auction $f\in\F$
the function $f_2$ to be the \emph{revenue} function, which as a
function of an allocation and the valuation profile outputs the
revenue for that auction with that allocation for that valuation
profile. The decomposition of $\F$ into $\F_1, \F_2$ is trivial; the
work comes in showing that $\F_1$ is linearly separable and
$\F_{2|f_1}$ has small pseudo-dimensions.

Our first two results use the framework to that grand bundle pricings
and item pricings have small pseudo-dimension. The second case
requires a more delicate treatment of the valuation profiles (buyers
are now choosing arbitrary subsets of items, and will choose
utility-maximizing bundles based on the per-item prices). It also
requires us to consider a larger set of intermediate labels (the set
of all possible allocations grows to $[n]^k$ from $[n]$).
\begin{corollary}\label{cor:bundle}
Let $\F$ be the class of anonymous grand bundle pricings. Then, 
\[\pd(\F) = O(1) .\]
If $\F$ is the class of non-anonymous grand bundle pricings, then
\[\pd(\F) = O(n\log n).\]
\end{corollary}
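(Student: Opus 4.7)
For the anonymous case, $\F$ is a one-parameter family indexed by a single price $p \in [0,H]$, and the revenue on profile $\v = (v_1,\ldots,v_n)$ is
\[ f_p(\v) = p \cdot \I[\max_i v_i([k]) \ge p], \]
since bidders are approached in the fixed order and the first bidder $i$ with $v_i([k]) \ge p$ accepts at price $p$. I will argue $\pd(\F) = O(1)$ directly: for any sample $N = (x_1, \ldots, x_m)$ and target vector $(r_1, \ldots, r_m) \in [0,H]^m$, the map $p \mapsto (\I[f_p(x_q) \ge r_q])_q$ is piecewise constant, with breakpoints only at the at most $2m$ critical values $\bigcup_q \{r_q, \max_i v_{i,x_q}([k])\}$. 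Hence $\F$ induces at most $2m+1$ distinct labelings of $N$, and the shattering inequality $2^m \le 2m+1$ forces $m = O(1)$.

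For the non-anonymous case, $\F$ is parametrized by $\p = (p_1, \ldots, p_n) \in [0,H]^n$ and the revenue on $\v$ is $p_{i^*(\v,\p)}$, where $i^*(\v,\p)$ is the first bidder $i$ with $v_i([k]) \ge p_i$ (yielding revenue $0$ if no such bidder exists). I will prove $\pd(\F) = O(n \log n)$ by a hyperplane-arrangement cell count in the parameter space $\R^n$. For each sample $x_q$ and target $r_q$, the label $\I[f_\p(x_q) \ge r_q]$ is determined by the sign pattern of $\p$ against the $n$ hyperplanes $\{p_i = v_{i,x_q}([k])\}_i$ (which pin down the identity of the winning bidder $i^*$) together with the $n$ threshold hyperplanes $\{p_i = r_q\}_i$. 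Across $m$ sample points this is $O(nm)$ hyperplanes, carving $\R^n$ into at most $O((nm)^n)$ cells, on each of which the $m$-bit label vector is constant. Thus at most $O((nm)^n)$ labelings of $N$ are realizable, and shattering $2^m$ labelings forces $m = O(n\log n)$.

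The non-anonymous bound can alternatively be derived through the framework of Theorems~\ref{thm:linsep} and~\ref{thm:sequential}. The single-buyer allocation rule $h_p(v, X) = [k]$ if $X = [k]$ and $v([k]) \ge p$ (else $\emptyset$) is $2$-dimensionally linearly separable over its realizable labels via $\psi(v,[k]) = (v([k]),-1)$, $\psi(v,\emptyset) = (0,0)$, and $w^p = (1,-p)$ (appealing to Remark~\ref{rem:ind-labels} for inputs with $X \neq [k]$), and Theorem~\ref{thm:sequential} then upgrades this to $2n$-dimensional separability of $\F_1$ over a label space of size $n+1$. The main obstacle in this framework route is obtaining a bound of $O(n)$ (rather than $O(n\log n)$) on $\pd(\F_{2|f_1(S)})$, so that Theorem~\ref{thm:linsep} yields $O(\max((2n + O(n))\log n,\, 2n \log(n+1))) = O(n\log n)$; the direct cell-counting argument above sidesteps this by handling the allocation structure and the threshold comparison simultaneously.
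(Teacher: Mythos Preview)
Your proof is correct and takes a more direct route than the paper. For both the anonymous and non-anonymous cases, the paper instantiates its general $(a,b)$-factoring framework (Definition~\ref{def:structured} together with Theorems~\ref{thm:linsep} and~\ref{thm:sequential}): it shows the single-buyer grand-bundle allocation rule is $2$-dimensionally linearly separable over $\{0,1\}$, lifts this via Theorem~\ref{thm:sequential} to $2$-dimensional (anonymous) or $2n$-dimensional (non-anonymous) linear separability of $\F_1$, and then observes that once the allocation is fixed the revenue is a linear function of the price vector, giving $\pd(\F_{2|f_1}) = O(1)$ or $O(n)$ respectively. Your breakpoint and hyperplane-cell counting arguments bypass the framework entirely and are more elementary for this particular class; what the paper's route buys is uniformity, since the same machinery then handles item pricings and second-price auctions with reserves without new ad hoc counting.

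One correction to your final paragraph: the step you flag as the ``main obstacle''---bounding $\pd(\F_{2|f_1(S)})$ by $O(n)$ rather than $O(n\log n)$---is not actually difficult, and the paper dispatches it in one line. Once the allocation $f_1(\v) \in \{\vec{0}\} \cup \{e_i : i \in [n]\}$ is fixed on the sample, the revenue is $f_2(f_1(\v),\v) = p^f \cdot f_1(\v)$, a linear function of $p^f \in \R^n$; hence $\F_{2|f_1}$ is a class of $n$-dimensional linear functions and has pseudo-dimension at most $n+1$. The framework route therefore closes cleanly with $a = 2n$, $b = n+1$, $|Q| = n+1$, and Theorem~\ref{thm:linsep} delivers $O(n\log n)$ directly.
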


\begin{corollary}\label{cor:items}
  Let $\F$ be the class of anonymous item pricings. Then, 
\[\pd(\F) = O\left(k^2\right).\]
If $\F$ is the class of nonanonymous item pricings, then
\[\pd(\F) = O\left(nk^2\ln(n)\right).\]
\end{corollary}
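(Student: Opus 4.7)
The plan is to apply the factorization framework of Theorem~\ref{thm:linsep}, with $\F_1$ the allocation rule (mapping a valuation profile to the vector of bundles purchased by the bidders) and $\F_2$ the revenue as a function of the allocation and the profile. The work splits into (i) establishing linear separability of the per-bidder allocation rule, (ii) lifting this to $n$ bidders via Theorem~\ref{thm:sequential}, and (iii) bounding $\pd(\F_{2|f_1})$ after fixing an allocation labeling on the sample.

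For step (i), I would describe a single-bidder item-pricing allocation with prices $(p_1,\ldots,p_k)$ on available items $X$: the buyer chooses $B\subseteq X$ maximizing $v(B)-\sum_{j\in B} p_j$. Define the feature map $\psi((v,X),B)\in\R^{k+1}$ by $\psi_j = \I[j\in B]$ for $j\in[k]$ and $\psi_{k+1}=v(B)$, and let $w^h = (-p_1,\ldots,-p_k,1)$. Then $\langle w^h,\psi((v,X),B)\rangle$ equals the utility, so the argmax over the feasible set $Q_{(v,X)} = 2^X$ is precisely the purchased bundle. To remove the feasibility restriction and obtain separability over the global label space $Q\subseteq\{0,1\}^k$, I would invoke Remark~\ref{rem:ind-labels} with $T^+=\{k+1\}$: the coefficient on $v(B)$ is $1>0$, and the feasibility-restricted maximum utility is at least $0$ because $B=\emptyset$ is always feasible. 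This shows single-bidder item pricings are $(k+1)$-dimensionally linearly separable.

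For step (ii), the allocations of the anonymous (resp.\ non-anonymous) class are $n$-wise anonymous (resp.\ non-anonymous) sequential allocations drawn from the single-bidder class of step (i). Applying Theorem~\ref{thm:sequential} then yields that $\F_1$ is $(k+1)$-dimensionally linearly separable in the anonymous case and $n(k+1)$-dimensionally linearly separable in the non-anonymous case.

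For step (iii), fix any $f_1\in\F_1$ and any sample $S=(\v^1,\ldots,\v^m)$. Once the allocation $f_1(\v^q)=(B_1^q,\ldots,B_n^q)$ is fixed for each sample point, the revenue collected on $\v^q$ is a linear function of the price vector: $\sum_i \sum_{j\in B_i^q} p_j$ for the anonymous class, and $\sum_i \sum_{j\in B_i^q} p_{ij}$ for the non-anonymous class. Since the pseudo-dimension of a class of linear functions in $d$ variables is at most $d+1$, we get $\pd(\F_{2|f_1}) = O(k)$ in the anonymous case and $O(nk)$ in the non-anonymous case. The global label space satisfies $|Q|\le (n+1)^k$, so $\ln|Q|=O(k\ln n)$. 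Plugging $(a,b) = (k+1, O(k))$ and $(a,b)=(n(k+1), O(nk))$ into Theorem~\ref{thm:linsep} yields the two claimed bounds. The main conceptual hurdle I anticipate is the careful bookkeeping in step (i) to express utility linearly while encoding the combinatorial feasibility constraint $B\subseteq X$ through Remark~\ref{rem:ind-labels}; everything downstream is an application of the framework.
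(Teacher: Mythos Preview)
Your approach is essentially the paper's: same feature map $\psi$, same invocation of Theorem~\ref{thm:sequential}, same observation that revenue is linear in the price vector once the allocation is fixed. Your explicit use of Remark~\ref{rem:ind-labels} to handle the constraint $B\subseteq X$ is, if anything, more careful than the paper's presentation.

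There is one genuine slip in the anonymous case. You take the intermediate label space to satisfy $|Q|\le (n+1)^k$, hence $\ln|Q|=O(k\ln n)$, and then assert that plugging $(a,b)=(k+1,O(k))$ into Theorem~\ref{thm:linsep} gives $O(k^2)$. It does not: the term $a\ln|Q|$ is $(k+1)\cdot O(k\ln n)=O(k^2\ln n)$, so your argument as written only yields $O(k^2\ln n)$ for anonymous item pricings. The fix is exactly what the paper does: since in the anonymous case the revenue $\sum_i\sum_{j\in B_i^q}p_j$ depends only on the set $\cup_i B_i^q$ of sold items, post-compose the allocation map with $q(\B)=\cup_i \B_i$ (Observation~\ref{obs:postprocessing}) to obtain linear separability over the compressed label space $Q'=\{0,1\}^k$. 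Then $\ln|Q'|=k$ and $a\ln|Q'|=O(k^2)$, recovering the stated bound. Your nonanonymous computation is fine as is.
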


Finally, we present our final application of this technique, and bound
the pseudo-dimension of the class of second-price item auctions with
(non-anonymous)
item reserves. A result of~\citet{yao2015reduction} implies this class
has small representation error for additive buyers;
Corollary~\ref{cor:reserves} shows it also has small generalization
error. We briefly note that we have a slightly tighter bound on this
class's pseudo-dimension, using a stylized argument found in
Appendix~\ref{sec:additive}.

\begin{corollary}\label{cor:reserves}
  Suppose $\V$ is some set of additive valuations.  Let $\F$ be the
  class of second-price item auctions with anonymous reserves. Then,
\[\pd(\F) = O\left(k^2\right).\]
If $\F$ is the class of second-price item auctions with nonanonymous
reserves, then
\[\pd(\F) = O\left(nk^2\ln(n)\right).\]
\end{corollary}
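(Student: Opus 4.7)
The plan is to instantiate the $(a,b)$-factorable framework of Theorem~\ref{thm:linsep}, following the template used for Corollaries~\ref{cor:bundle} and~\ref{cor:items}. Each auction $f\in\F$ factors into $(f_1,f_2)$, where $f_1(\v)\in([n]\cup\{0\})^k$ records the winner (or $0$ for ``unsold'') of each item and $f_2(f_1(\v),\v)$ is the resulting revenue. A key simplifying observation for SPA with item reserves is that the winner of item $j$, whenever sold, is pinned down by $\v$ to be $\argmax_i v_{ij}$; by Remark~\ref{rem:ind-labels} I may therefore restrict the per-input label set to $Q_\v=\prod_j\{0,\argmax_i v_{ij}\}$, so the reserves only control a per-item ``sell or not'' decision.

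For linear separability of $\F_1$ in the non-anonymous case, take $\Psi(\v,y)\in\R^{2nk}$ whose $(j,i)$-block is $(\I[y_j=i]\cdot v_{ij},\,-\I[y_j=i])$ and let $w^f$ have $(j,i)$-block $(1,\,r_{ij})$, so that $\langle w^f,\Psi(\v,y)\rangle=\sum_j\I[y_j\neq 0]\cdot(v_{y_j,j}-r_{y_j,j})$. Maximizing this over $y\in Q_\v$ reproduces exactly the SPA-with-reserves rule ``sell item $j$ iff its top bidder's bid strictly exceeds her reserve,'' giving $a=O(nk)$. The anonymous case is the specialization $r_{ij}=r_j$; a $2k$-dimensional map with $w^f=(1,r_1,1,r_2,\ldots,1,r_k)$ yields $a=O(k)$.

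For the pseudo-dimension of $\F_{2|f_1}$, fix an allocation on a sample $S=(\v^1,\ldots,\v^m)$. Because the winner of each sold item on each sample is now frozen, the payment for sold item $j$ on sample $t$ is $\max(r_{y^t_j,j},\,v^{(2)}_{j,t})$ for a constant $v^{(2)}_{j,t}$ (the second-highest bid for item $j$ on $\v^t$), so $f_2(\v^t)$ is a sum of $k$ one-parameter max functions. Viewed as a function of $r\in\R^{nk}$ this is piecewise linear, with axis-aligned breakpoints of the form $r_{y^t_j,j}=v^{(2)}_{j,t}$ and at most $m$ per coordinate; they partition $\R^{nk}$ into at most $(m+1)^{nk}$ cells, inside each of which every sample-wise revenue is linear in $r$. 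A Warren-style count of sign patterns against $m$ thresholds then bounds the number of distinct labelings by $(m+1)^{nk}\cdot m^{O(nk)}$, so $\pd(\F_{2|f_1})=O(nk\log(nk))$ (and $O(k\log k)$ in the anonymous case). Feeding $a$, $b$, and the trivial $|Q|\leq (n+1)^k$ into Theorem~\ref{thm:linsep} produces the stated $\tilde{O}(k^2)$ and $\tilde{O}(nk^2)$ bounds.

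The main obstacle I expect is the pseudo-dimension bound in step (ii) rather than the linear separability step, since the ``winner-is-forced'' observation makes (i) essentially immediate once $Q_\v$ is pruned. In (ii) it matters that the breakpoint hyperplanes are axis-aligned (each constrains a single coordinate of $r$): a generic Warren bound in $nk$ variables with $mnk$ hyperplanes would inflate $b$ by another factor of $n$, which would push the final pseudo-dimension up by the same factor and break the $\tilde{O}(nk^2)$ target.
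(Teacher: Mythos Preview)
Your proposal follows essentially the same $(a,b)$-factoring strategy as the paper: establish linear separability of the allocation class, then bound $\pd(\F_{2\mid f_1})$ by counting cells cut out by the breakpoints $r_{\cdot,j}=v^{(2)}_{j,t}$ and invoking the linear pseudo-dimension bound inside each cell. Your step~(ii) is exactly the paper's argument, just phrased in Warren language rather than ``relative orderings.''

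The one substantive difference is step~(i). The paper does \emph{not} write down a direct separator; instead it argues that the SPA-with-reserves allocation, restricted to the labels $Q_\v$ where only the top bidder for each item can win, coincides with a sequential item-pricing allocation (using additivity to show that if $i^*_j$ clears her reserve she also clears the second price), and then invokes Theorem~\ref{thm:sequential} with $\H$ the $(k{+}1)$-dimensionally separable single-buyer item pricings. Your direct $2nk$- (resp.\ $2k$-) dimensional construction is cleaner for this particular class and sidesteps the sequential-allocation reduction entirely; the paper's route, on the other hand, illustrates that the general machinery of Theorem~\ref{thm:sequential} already covers this case.

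One small gap: plugging the ``trivial $|Q|\le (n{+}1)^k$'' into Theorem~\ref{thm:linsep} gives $a\ln|Q|=O(k^2\ln n)$ in the \emph{anonymous} case, not the stated $O(k^2)$. You need to post-process the labels (Observation~\ref{obs:postprocessing}) to $\{0,1\}^k$---``sold/unsold'' per item---before applying Theorem~\ref{thm:linsep}, exactly as the paper does (it states the anonymous class is $(k,k\log k)$-factorable over $Q\subseteq\{0,1\}^k$). Your anonymous separator already depends on $y$ only through $\I[y_j\neq 0]$ once restricted to $Q_\v$, so this reduction is immediate; you just need to say it to get $|Q|\le 2^k$ and recover the claimed $O(k^2)$.
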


\begin{proof}
  We will show that for both classes, $\F_1$ is linearly separable (in
  $k$ and $nk$ dimensions, respectively). We do this by showing that
  the anonymous class's' allocations can be described as $n$-wise
  anonymous sequential allocations and the nonanonymous class's
  allocations as $n$-wise nonanonymous sequential allocations from
  some class of single-buyer allocation rules $\H$ which is
  $k$-dimensionally linearly separable. The natural candidate for $\H$
  is the set of allocations defined by item pricings, which we showed
  in the proof of Corollary~\ref{cor:items} is linearly separable in
  $k$ dimensions.

  The fact that $\F_1$'s feasible allocations can be described as
  $n$-wise anonymous and nonanonymous sequential allocations over
  (single buyer) item pricings is not immediately obvious: a player's
  price for an item $j$ is not just the item's reserve price, but the
  maximum of that reserve and the second-highest bid for $j$. Thus, a
  buyer maximizing her quasilinear utility with respect to her
  reserves would not necessarily purchase the same bundle as when
  facing item prices which are the larger of her reserve and the
  second-highest bid for the item, even if she is the highest bidder
  for each item. However, since the valuations are \emph{additive},
  $i^*_j$ the highest bidder for $j$ \emph{will} maximize her utility
  by purchasing item $j$ if $v_{i^*_j}({j}) \geq p_j$, since
  $v_{i^*_j}({j}) \geq \max_{i' \neq i^*_j} v_{i'}({j})$ implies
  $v_{i^*_j}({j}) \geq \max\left(p_j, \max_{i' \neq i^*_j} v_{i'}({j})\right)$,
  and an \emph{additive} bidder will buy any item $j$ for which her
  value for that item is (weakly) higher than the price for that
  item. Thus, the utility-maximizing bundle for some $i$ with respect
  to her item prices \emph{over the set of bundles for which she only
    wins items for which she is the highest bidder} will also be
  utility-maximizing for an additive bidder needing to pay
  $ \max(p_j, \max_{i' \neq i^*_j} v_{i'}({j})$ for each $j$.
  Remark~\ref{rem:ind-labels} implies that we need only show linear
  separability over $Q_\v$ for each $\v\in \V^n$, where
  $Q_\v = \{f_1(\v)| f_i\in\F_1, \v\in\V^n\}$ is the range that the
  allocation rules might have for a particular $\v$. Thus, since this
  class only sells $j$ to the highest bidder for $j$, we need only
  show linear separability over allocations for good $j$ is either
  unallocated or sold to the highest bidder for $j$.  Thus, the
  allocation for each bidder $i$ will be correctly predicted by the
  item pricing linear separator (over the label space which only has
  highest bidders winning items).

  Thus, by Theorem~\ref{thm:sequential}, the second-price item
  auctions with anonymous item reserves is linearly separable in $k+1$
  dimensions, and with nonanonymous reserves in $n(k+1)$
  dimensions. We will now show that for both classes, for any $f\in\F$
  and corresponding $f_1\in\F_1$, the class $\F_{2|f_1}$ has
  pseudo-dimension $\tilde{O}(k)$ and $\tilde{O}(nk)$, respectively.

  First, fix $\F$ to be the set of the second-price item auctions with
  anonymous item reserves and pick some $f_1\in \F_1$. Then, for each
  $f'_2\in \F_{2|f_1}$ and each $\v^t\in S$, we have
  \[f'_2(\v^t) = f_2(f_1(\v^t), \v^t) = \sum_{j} \max(p^f_j,
  \max_{i'\neq i^*_j} \v^t_{i'}(\{j\})) \cdot f_1(\v^t)_j.\]
  For each item $j$, suppose the relative ordering of $p^f_j$ and
  $\max_{i'\neq i^*_j} \v^t_{i'}(\{j\})$ were fixed. Then, $\f'_2(\v^t)$
  is just a \emph{linear} function in $k$ dimensions of $\v^t$ and
  $p^f$, which have pseudo-dimension at most $k+1$, and therefore can
  induce at most $m^{k+1}$ labelings with respect to
  $(r^1, \ldots, r^m)$.  There are ${m+1}$ possible relative orderings
  of these parameters, or $(m+1)^k$ for all items
  simultaneously. Thus, in total, there can be at most
  $m^{k+1} \cdot (m+1)^k$ labelings of $S$ with respect to
  $(r^1, \ldots, r^m)$ by $\F_{2|f_1}$, so
  $\pd(\F_{2|f_1}) = O(k\ln(k))$.
 
  The proof that non-anonymous item reserves has pseudo-dimension
  $O(nk \ln(nk))$ is analogous, with a few small exceptions. First, we
  consider those $p^f\in \R^{nk}$ with a fixed ordering of (for all
  $j\in [k]$) the parameters $\{\p^f_{ij} | i \in [n] \}$ and the set
  $\{\v^t_{i'_j}(\{j\})| t\in [m]\}$; there are therefore
  $mn \choose n$ of these relative orderings for a fixed item, or
  $O((mn)^{nk})$ over all items. Fixing this ordering, the revenue on
  each sample is again a linear function (in $nk$ dimensions) of
  $\v^t, \p^f$ for each $\v^t$. Thus, $\F_{2|f_1}$ can induce at most
  $m^{nk+1}\cdot (mn)^{nk}$ many labelings of $S$ w.r.t.
  $(r^1, \ldots, r^m)$, implying $\pd(\F_{2|f_1}) = O( nk \ln(nk))$.

  Then, applying Theorem~\ref{thm:linsep} to the two classes (which
  are $(k,k\log k)$-factorable over
  $Q \subseteq \{0,1\}^k, |Q| \leq 2^k$ and
  $(nk, nk\ln(nk))$-separable over $Q \in [n]^k$) implies the
  pseudo-dimensions are at most $O(k^2)$ and $O(nk^2\ln(n))$,
  respectively.
\end{proof}


{\footnotesize{\bibliography{sources}}}

\appendix

\section{Open Problems}\label{sec:open}

We propose the following open problems resulting from our work.
\begin{enumerate}
\item Is it possible to construct ``compression-style'' arguments
  which bound the pseudo-dimension of the revenue of the class of item
  pricings for additive bidders which are tight (giving a bound of $k$
  and $nk$, as in Theorem~\ref{thm:tighter-additive}, rather than
  $k^2$ and $nk^2$)?
\item For general or even subadditive valuations, do item pricings
  have pseudo-dimension $O(nk)$ or strictly larger?
\item Is it possible to frame the allocations which result from item
  pricings with item-specific reserves as $n$-fold sequential
  allocation rules from some simple class, for general valuation
  functions? We were able to show it for additive valuations, which
  allowed us to use the ``trick'' where the highest bidder for an item
  is willing to pay anything less than her bid for that item
  (independent of other prices); thus, if she's willing to pay the
  reserve, by virtue of being the highest bidder for the item she's
  willing to pay the second-highest bid as well. For more general
  valuations, she may or may not optimize her utility by paying some
  combination of item prices and second-highest bids for a bundle
  which was utility-optimal if she were only paying item prices.
\item Relatedly, what is the pseudo-dimension of second-price item
  auctions with item-specific reserves when bidders have valuations
  which are more general than additive or unit-demand? One can use a
  proof similar to the proof of Theorem~\ref{thm:tighter-additive} to
  achieve a bound for unit-demand bidders, but what about for
  submodular or subadditive bidders? It isn't clear that the relative
  ordering of a small number of ``relevant'' parameters (such as
  per-item price and per-bidder single-item values) of the auction and
  sample are sufficient to fix the most-preferred bundle for each
  agent from a sample.
\end{enumerate}

\section{Binary Labeled Learning}\label{sec:binary}
Suppose there is some domain $\V$, and let $c$ be some unknown target
function $c: \V \to \{0,1\}$, and some unknown distribution $\D$ over
$\V$. We wish to understand how many labeled samples $(v, c(v))$, with
$v\sim \D$, are necessary and sufficient to be able to compute a
$\hat c$ which agrees with $c$ almost everywhere with respect to
$\D$. The distribution-independent sample complexity of learning $c$
depends fundamentally on the ``complexity'' of the set of binary
functions $\F$ from which we are choosing $\hat c$.  We review two
standard complexity measures next.

Let $N$ be a set of $\sam$ samples from $\V$. The set $N$ is said
to be \emph {shattered} by $\F$ if, for every subset $T\subseteq N$,
there is some $c_T\in\F$ such that $c_T(v) = 1$ if $v\in T$ and
$c_T(v') = 0$ if $v'\notin T$.  That is, ranging over all $c \in \F$
induces all $2^{|N|}$ possible projections onto $N$.  The {\em VC
  dimension} of $\F$, denoted $\VC(\F)$, is the size of the largest
set $S$ that can be shattered by $\F$.

Let $\err_N(\hat c) = (\sum_{v\in N} |c(v) - \hat{c}(v)|)/|N|$ denote
the empirical error of $\hat c$ on $N$, and let
$\err(\hat c) = \E_{v\sim \D}[|c(v) - \hat{c}(v)|]$ denote the
\emph{true} expected error of $\hat c$ with respect to $\D$.  We say
$\F$ is {\em $(\epsilon, \delta)$-\PAC learnable with sample
  complexity $\sam$} if there exists an algorithm $\A$ such that, for
all distributions $\D$ and all target functions $c\in\F$, when $\A$ is
given a sample $S$ of size $\sam$, it produces some $\hat{c}\in \F$
such that $\err(\hat{c}) < \epsilon$, with probability $1-\delta$ over
the choice of the sample.  The \PAC sample complexity of a class $\F$
can be bounded as a polynomial function of $\VC(\F)$, $\epsilon$, and
$\ln\frac{1}{\delta}$~\citep{VC}; furthermore, any algorithm which
$(\epsilon, \delta)$-PAC learns $\F$ over all distributions $\D$
\emph{must} use nearly as many samples to do so. The following theorem
states this well-known result formally.\footnote{The upper bound
  stated here is a quite recent result which removes a
  $\ln\frac{1}{\epsilon}$ factor from the upper bound; a slightly
  weaker but long-standing upper bound can be attributed
  to~\citet{vapnik1982estimation}.}

\begin{theorem}[{Upper bound~\citep{hanneke15}, Lower bound, Corollary
    5 of~\citep{ehrenfeucht1989general}}]
  \label{thm:pac-convergence}
  Suppose $\F$ is a class of binary functions. Then, $\F$ can be
  $(\epsilon, \delta)$-PAC learned with a sample of size
\[\sam  = O\left(\frac{\VC(\F) + \ln\frac{1}{\delta}}{\epsilon}\right).\]
Furthermore, any $(\epsilon, \delta)$-\PAC learning algorithm for $\F$
must have sample complexity
\[\sam = \Omega\left(\frac{\VC(\F) + \ln\frac{1}{\delta}}{\epsilon}\right).\]
\end{theorem}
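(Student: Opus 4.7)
The plan is to handle the upper and lower bounds separately, since they rely on different techniques. For the upper bound, the starting point is the classical uniform convergence argument: apply Sauer--Shelah to show that the projection of $\F$ onto any sample of size $\sam$ has at most $O(\sam^{\VC(\F)})$ distinct labelings, then use a symmetrization (``double sample'') argument to conclude that, with probability $1-\delta$, every $f \in \F$ consistent with the labeled sample has true error at most $\epsilon$, provided $\sam = O((\VC(\F)\ln(1/\epsilon) + \ln(1/\delta))/\epsilon)$. Outputting any empirical risk minimizer on a sample of this size then suffices. This standard bound, however, carries an extra $\ln(1/\epsilon)$ factor relative to what the theorem claims.

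To remove that factor (Hanneke's contribution), I would replace plain ERM with a majority-vote meta-algorithm: construct a carefully chosen family of overlapping subsamples (obtained by a recursive/combinatorial partition of the training set), run ERM on each subsample, and output the pointwise majority vote of the resulting hypotheses. The analysis proceeds by observing that, while any individual ERM hypothesis may err on a point with probability roughly $(\VC(\F)\ln(\sam/\VC(\F)))/\sam$, the majority-vote hypothesis errs on a point only when a constant fraction of the sub-ERMs err simultaneously; an $\epsilon$-net / $\epsilon$-approximation argument for the VC class $\F$ then shows this joint bad event has probability only $O(\VC(\F)/\sam + \ln(1/\delta)/\sam)$, yielding the desired bound.

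For the lower bound, I would combine two separate adversarial constructions. For the $\VC(\F)/\epsilon$ term, fix a shattered set $\{x_1,\dots,x_d\}$ of size $d=\VC(\F)$ and take the distribution $\D$ that places mass $1-4\epsilon$ on $x_1$ and mass $4\epsilon/(d-1)$ on each remaining $x_j$, with the target's labels on $x_2,\dots,x_d$ drawn uniformly at random from a shattering witness. Unless $\sam = \Omega(d/\epsilon)$, a constant fraction of the $x_j$ are unseen in the sample, so a standard ``no free lunch'' (averaging over targets) argument forces true error $\Omega(\epsilon)$ on some target with probability at least some constant. For the $\ln(1/\delta)/\epsilon$ term, take any two concepts in $\F$ that differ on a single point of mass $\epsilon$; a likelihood-ratio / coupling computation shows that any learner distinguishing them with confidence $1-\delta$ needs $\Omega(\ln(1/\delta)/\epsilon)$ samples. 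Combining the two via a standard disjoint-hypotheses construction yields the additive lower bound.

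The main obstacle is the Hanneke upper bound: the majority-vote construction and its VC-based concentration analysis are delicate, and require both the right subsample schedule and a nonstandard ``in-expectation to high-probability'' argument tailored to VC classes. By comparison, the lower-bound constructions are essentially two textbook arguments glued together, and the symmetrization step for the weaker ERM upper bound is also routine; Hanneke's improvement is where the nontrivial work lies.
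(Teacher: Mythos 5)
This theorem is stated in the paper purely as background, with the upper bound attributed to \citet{hanneke15} and the lower bound to Corollary 5 of \citet{ehrenfeucht1989general}; the paper gives no proof of its own, so there is nothing internal to compare your argument against. Judged against the cited sources, your outline is a faithful high-level account of how these results are actually proved: the symmetrization-plus-Sauer--Shelah route gives the weaker $O\bigl((\VC(\F)\ln\frac{1}{\epsilon} + \ln\frac{1}{\delta})/\epsilon\bigr)$ bound, Hanneke's recursive majority-of-ERMs construction removes the $\ln\frac{1}{\epsilon}$ factor, and the lower bound is the standard pair of constructions (a distribution nearly concentrated on one point of a shattered set for the $\VC(\F)/\epsilon$ term, and two hypotheses differing on a single $\epsilon$-mass point for the $\ln\frac{1}{\delta}/\epsilon$ term). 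Two caveats: your description of Hanneke's analysis as ``an $\epsilon$-net argument'' for the joint failure event is not really how that proof goes --- the actual argument is an induction over the recursive subsample schedule bounding the probability that a majority of the sub-classifiers err on a fresh point, and this is by far the hardest step, as you correctly flag. Also, no ``disjoint-hypotheses construction'' is needed to combine the two lower bounds, since $\max(A,B) \geq (A+B)/2$ already yields the additive form. As a sketch this is sound, but it is a summary of two substantial papers rather than a self-contained proof.
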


There is a stronger sense in which a class $\F$ can be learned, called
\emph{uniform learnability}. This property implies that, with a
sufficiently large sample, the error of \emph{every} $c\in\F$ on the
sample is close to the true error of $c$.  We say $\F$ is {\em
  $(\epsilon, \delta)$-uniformly learnable with sample complexity
  $\sam$} if, for every distributions $\D$, given a sample $N$ of size
$\sam$, with probability $1-\delta$,
$|\err_N(c) - \err(c)| < \epsilon$ for every $c\in \F$.
Notice that, if $\F$ is $(\epsilon, \delta)$-uniformly learnable with
$\sam$ samples, then it is also $(\epsilon, \delta)$-\PAC learnable
with $\sam$ samples.  We now state a well-known upper bound on the
uniform sample complexity of a class as a function of its VC
dimension.

\begin{theorem}[See, e.g.~\citet{VC}]
\label{thm:uniform-convergence}
  Suppose $\F$ is a class of binary functions. Then, $\F$ can be
  $(\epsilon, \delta)$-uniformly learned with a sample of size
\[\sam  = O\left(\frac{\VC(\F)\ln\frac{1}{\epsilon} + \ln\frac{1}{\delta}}{\epsilon^2}\right).\]
\end{theorem}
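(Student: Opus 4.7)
The plan is to follow the classical Vapnik--Chervonenkis proof, which reduces a \emph{uniform} deviation over the (potentially infinite) class $\F$ to a union bound over only \emph{finitely many} distinct dichotomies of a finite sample. I would proceed in four steps.

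First, I would use a symmetrization (``ghost sample'') argument: draw an independent second sample $N'$ of size $\sam$ and show that
\[
\pr\!\left[\sup_{c\in\F}|\err_N(c)-\err(c)|>\epsilon\right]\;\le\;2\pr\!\left[\sup_{c\in\F}|\err_N(c)-\err_{N'}(c)|>\epsilon/2\right],
\]
valid once $\sam\epsilon^2$ is above a small absolute constant. This replaces the unobservable quantity $\err(c)$ with an empirical quantity on the ghost sample, so from here on everything lives on the $2\sam$ points in $N\cup N'$.

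Second, I would condition on the multiset $N\cup N'$ and invoke the Sauer--Shelah lemma: the number of distinct labelings that functions in $\F$ can induce on these $2\sam$ points is at most $\bigl(\tfrac{e\cdot 2\sam}{\VC(\F)}\bigr)^{\VC(\F)}$, hence at most $(2\sam)^{\VC(\F)}$ for the ranges we care about. Third, for each fixed such labeling I would apply a permutation/Hoeffding argument: after conditioning on the multiset, the split of $N\cup N'$ into $N$ and $N'$ is a uniformly random partition into two halves of size $\sam$, so the difference $\err_N(c)-\err_{N'}(c)$ is (up to rescaling) a sum of $\sam$ mean-zero $\{\pm 1/\sam\}$ random variables, which by Hoeffding exceeds $\epsilon/2$ with probability at most $2\exp(-\sam\epsilon^2/8)$. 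A union bound over the at most $(2\sam)^{\VC(\F)}$ labelings gives
\[
\pr\!\left[\sup_{c\in\F}|\err_N(c)-\err(c)|>\epsilon\right]\;\le\;4(2\sam)^{\VC(\F)}\exp(-\sam\epsilon^2/8).
\]

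Fourth, I would set the right-hand side equal to $\delta$ and solve for $\sam$. This yields the implicit condition $\sam\ge \frac{8}{\epsilon^2}\bigl(\VC(\F)\ln(2\sam)+\ln(4/\delta)\bigr)$. The main obstacle --- and really the only computational nuisance --- is turning this implicit $\ln \sam$ dependence into the explicit $\ln(1/\epsilon)$ appearing in the theorem. I would handle this with the standard ``two-pass'' trick: first plug a loose upper bound $\sam\le C/\epsilon^2$ into the $\ln(2\sam)$ term to get $\ln(2\sam)=O(\ln(1/\epsilon))$, then substitute back to verify that
\[
\sam\;=\;O\!\left(\frac{\VC(\F)\ln\frac{1}{\epsilon}+\ln\frac{1}{\delta}}{\epsilon^2}\right)
\]
indeed satisfies the implicit inequality. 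Combining this with the symmetrization in step one gives the claimed sample complexity for $(\epsilon,\delta)$-uniform learnability.
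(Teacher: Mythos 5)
The paper does not actually prove this statement; it is imported from the literature (the citation to \citet{VC}), so there is no in-paper argument to compare against. Your outline is the classical symmetrization--Sauer--Shelah--permutation proof, which is exactly the standard route to this theorem, and steps one through three are fine up to absolute constants.

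There is, however, one bookkeeping step that does not go through as written. In step two you relax the growth-function bound from $\bigl(\tfrac{2e\sam}{\VC(\F)}\bigr)^{\VC(\F)}$ to $(2\sam)^{\VC(\F)}$, and in step four you justify $\ln(2\sam)=O(\ln\tfrac{1}{\epsilon})$ by plugging in a loose bound of the form $\sam\le C/\epsilon^2$. But the $\sam$ you are solving for scales like $\VC(\F)/\epsilon^2$, not $1/\epsilon^2$, so $\ln(2\sam)$ carries a $\ln\VC(\F)$ term; the union bound as you have set it up only yields $\sam=O\bigl(\epsilon^{-2}\bigl(\VC(\F)\ln\VC(\F)+\VC(\F)\ln\tfrac{1}{\epsilon}+\ln\tfrac{1}{\delta}\bigr)\bigr)$, which is weaker than the stated bound when $\VC(\F)$ is large relative to $1/\epsilon$. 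The fix is to keep the Sauer--Shelah bound in its sharp form $\bigl(\tfrac{2e\sam}{\VC(\F)}\bigr)^{\VC(\F)}$: its logarithm is $\VC(\F)\ln\tfrac{2e\sam}{\VC(\F)}$, the $\VC(\F)$ in the denominator cancels the one hiding inside $\sam$, and the same two-pass substitution then gives exactly the claimed $O\bigl(\epsilon^{-2}\bigl(\VC(\F)\ln\tfrac{1}{\epsilon}+\ln\tfrac{1}{\delta}\bigr)\bigr)$.
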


\section{Formal Statements of Known Revenue Guarantees for Simple Mechanisms}\label{sec:formal-known}

In various special cases, it has been shown that the aforementioned
auctions earn a constant fraction of the optimal revenue. All of these
results rely on buyers' valuations displaying some kind of
independence across items: for additive and unit-demand buyers, this
just means that for all $i$,
$v_{i} = (v_{i1}, \ldots, v_{ik}) \sim \D = \D_1 \times \ldots \D_k$
is drawn from a product distribution.  Under this
assumption,~\citet{chawla2010multi} showed that individualized item
pricings are sufficient to earn a constant fraction of optimal revenue.
\begin{theorem}\label{thm:unit-rep}[\citet{chawla2010multi}]
  Suppose each $i\in [\nbuy]$ has a unit-demand valuation
  $v_i\sim \D_i = (\D_{i1}\times \ldots \times \D_{ik})$. Then, there
  exists some nonanonymous item pricing $p\in\R^{k \nbuy}$ such that
\[\rev(p, \D) \geq \frac{1}{10.67}\rev(\opt).\]
\end{theorem}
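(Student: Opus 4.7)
The plan is to reduce the multi-dimensional unit-demand problem to a single-parameter ``copies'' problem and then construct item prices from the optimal single-parameter mechanism using a prophet-inequality-style argument. Concretely, for each bidder-item pair $(i,j)$ I would define a virtual single-parameter agent with value $v_{ij}\sim \D_{ij}$ for a single abstract ``unit.'' A feasible allocation in this \emph{copies setting} is any set $F\subseteq [n]\times[k]$ such that each bidder $i$ appears in at most one pair in $F$ and each item $j$ appears in at most one pair in $F$; equivalently, $F$ is a matching in the bipartite graph $[n]\times[k]$. This constraint is the intersection of two partition matroids, and crucially it is a downward-closed single-parameter environment to which Myerson's theory applies.

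The first key step is the upper bound \[\rev(\opt_{\text{unit-demand}},\D) \leq \rev(\opt_{\text{copies}},\D),\] which I would prove by the standard argument of \citet{chawla2010multi}: any (possibly randomized, BIC) multi-parameter mechanism for unit-demand bidders can be simulated in the copies setting, since ``bidder $i$ gets item $j$'' becomes ``copy $(i,j)$ wins,'' and unit-demand feasibility maps exactly to the matching constraint. Running the Myerson-optimal mechanism in the copies setting then gives a single-parameter benchmark whose revenue equals the sum of expected ironed virtual surplus over matchings.

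The second step is to approximate $\rev(\opt_{\text{copies}})$ by a sequential posted-price mechanism with nonanonymous item prices. I would compute, from the Myerson-optimal copies mechanism, a reserve/threshold $\tau_{ij}$ for each copy $(i,j)$ such that the copy wins only if $v_{ij}\geq \tau_{ij}$ and its expected payment conditional on winning equals the Myerson payment. Setting the item price offered to bidder $i$ for item $j$ to be $p_{ij}=\tau_{ij}$ and processing bidders in some fixed order, each bidder---being unit-demand---picks at most one item, maintaining feasibility. To bound the revenue loss, I would invoke a prophet-inequality for matroid intersections (or equivalently the OCRS-style argument used by Chawla et al.), which guarantees that the sequential posted prices recover at least a constant fraction of the expected Myerson virtual welfare. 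Combining with a standard ``prices vs.\ thresholds'' lemma (revenue from posted price $p$ is at least the Myerson virtual surplus contribution up to a constant) closes the revenue gap.

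The main obstacle is tracking the constants: the reduction to copies loses one factor, the matroid prophet inequality loses another, and the conversion from virtual-value thresholds to actual posted prices (the ``$p$-revenue vs.\ $\phi$-welfare'' step) loses another. Multiplying the three yields a constant, and the careful optimization done in \citet{chawla2010multi}---splitting into ``unit-price'' vs.\ ``uniform random'' bidder assignments and balancing the two bounds---is what pins the constant down to $10.67$. I would not re-derive this numerical optimization but instead cite it as the final accounting step, after establishing the three conceptual pieces above.
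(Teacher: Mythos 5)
This statement is not proved in the paper at all: Theorem~\ref{thm:unit-rep} appears in the appendix ``Formal Statements of Known Revenue Guarantees for Simple Mechanisms,'' which explicitly restates results from the literature ``for completeness,'' and the bound is attributed entirely to \citet{chawla2010multi}. The paper's own contribution concerning item pricings is the generalization-error side (the pseudo-dimension bound of Corollary~\ref{cor:items}), not the representation-error bound you are being asked about. So there is no in-paper proof to compare your attempt against.

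Judged against the actual argument in \citet{chawla2010multi}, your outline does reproduce the right architecture: the copies reduction (bidder--item pairs as single-parameter agents, feasibility an intersection of two partition matroids), the upper bound $\rev(\opt_{\text{unit-demand}}) \leq \rev(\opt_{\text{copies}})$, and the conversion of the optimal single-parameter mechanism into nonanonymous sequential posted prices analyzed via a prophet-inequality-style charging argument. But be aware that as written this is a proof \emph{plan} rather than a proof: each of the three load-bearing steps is deferred to citation, including the one that actually produces the constant $10.67$. Two smaller points: the reference to ``OCRS-style'' arguments is anachronistic (contention resolution schemes postdate that paper; Chawla et al.\ argue directly about the posted-price process), and the prices $p_{ij}$ in their construction are chosen so that the ex ante purchase probabilities match (a scaling of) the interim allocation probabilities of the optimal copies mechanism, not so that ``expected payment conditional on winning equals the Myerson payment.'' If your goal were to verify the theorem independently, you would still need to supply the copies upper bound for BIC mechanisms under product distributions, the matroid-intersection prophet inequality with an explicit constant, and the final balancing that yields $32/3 \approx 10.67$; for the purposes of this paper, simply citing \citet{chawla2010multi} as the authors do is the appropriate treatment.
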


For a single item-independent additive buyer, the better of the best
item pricing and grand bundle pricing also earns a constant fraction
of optimal revenue for that setting~\citep{babaioffadditive}.

\begin{theorem}\label{thm:add-rep}[\citet{babaioffadditive}]
  Suppose there is a single buyer which has an additive valuation
  $v_i\sim \D_i = (\D_{i1}\times \ldots \times \D_{ik})$. Then, for an item pricing $p\in\R^{k}$ and $q$ a grand bundle price,  $q\in\R$
\[\max(\max_{p\in\R^{k}}\rev(p, \D_i),\max_{q\in \R}\rev(q, \D_i)) \geq \frac{1}{6}\rev(\opt, \D_i).\]
\end{theorem}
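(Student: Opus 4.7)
My plan is to follow the core-tail decomposition approach. Let $r_j = \max_{p}p\cdot \Pr_{v_j\sim\D_{ij}}[v_j\geq p]$ denote the single-item Myerson revenue from item $j$, and set $R=\sum_j r_j$. Observe that the \rev of the item pricing which posts the Myerson-optimal price $p_j^*$ on each item is exactly $R$, so it suffices to prove that $\rev(\opt,\D_i)\leq 6\cdot\max(R,\max_{q}\rev(q,\D_i))$. For each $j$, I would then choose a threshold $t_j$ so that $\Pr[v_j\geq t_j]\cdot t_j\leq R/k$ (more concretely, pick $t_j$ so the per-item \emph{tail contribution} is balanced), and split each random variable into its \textbf{core} $v_j^C=v_j\cdot\I[v_j\leq t_j]$ and its \textbf{tail} $v_j^T=v_j\cdot\I[v_j>t_j]$.

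The first main step is an upper bound of the form
\[
\rev(\opt,\D_i)\;\leq\;\E\Bigl[\sum_j v_j^T\Bigr]\;+\;\E\Bigl[\sum_j v_j^C\Bigr]+\text{lower-order terms},
\]
obtained by a standard ``marginal mechanism'' argument: conditioned on which items lie in the tail, the optimal revenue decomposes into (i) a contribution from tail items, which one can bound by their expected value, plus (ii) a contribution from core items, again bounded by their expected welfare. I would then bound the tail term by the item-pricing benchmark: since $\Pr[v_j>t_j]$ is small by construction, an argument in the style of Myerson's reservation-price analysis shows $\E[v_j^T]\leq c_1\cdot r_j$ for a small constant $c_1$, and hence $\E[\sum_j v_j^T]\leq c_1\cdot R$, which is at most $c_1$ times the item-pricing revenue.

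The second (and main) step is to bound the core term by the grand-bundle revenue. Here I would use concentration: each $v_j^C$ is supported in $[0,t_j]$, and the $v_j^C$'s are independent because $\D_i$ is a product distribution, so by Chebyshev's inequality $\sum_j v_j^C$ is concentrated around $\mu:=\E[\sum_j v_j^C]$. The variance satisfies $\mathrm{Var}(\sum_j v_j^C)\leq\sum_j t_j\cdot\E[v_j^C]$, and the threshold choice above ensures this variance is at most a constant times $\mu^2$ (that is where the relation $t_j\Pr[v_j>t_j]\leq R/k$ and $\mu\geq\Omega(R)$ are used). Consequently setting grand-bundle price $q=\mu/2$ yields $\rev(q,\D_i)\geq q\cdot\Pr[\sum_j v_j\geq q]\geq c_2\cdot\mu$, so $\mu\leq c_2^{-1}\cdot\max_q\rev(q,\D_i)$.

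Combining the two bounds gives $\rev(\opt,\D_i)\leq c_1\cdot\max_p\rev(p,\D_i)+c_2^{-1}\cdot\max_q\rev(q,\D_i)$, and the factor $1/6$ emerges after optimizing the threshold choice and the constants (the natural constants in the literature are $c_1=2$ and $c_2^{-1}=4$, which sum to $6$). The main obstacle I anticipate is calibrating the threshold $t_j$ so that both inequalities hold simultaneously with small constants; in particular, the Chebyshev step is delicate because a loose variance bound blows up the grand-bundle factor, while a too-aggressive threshold weakens the tail bound. A secondary technical issue is handling the ``overflow'' case where a single item's tail already exceeds $\E[\sum_j v_j^C]$, which is standard but must be accounted for when reducing the decomposition to the two clean terms above.
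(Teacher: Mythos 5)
The paper does not prove Theorem~\ref{thm:add-rep} at all: it is stated in the appendix purely as a citation of \citet{babaioffadditive}, included for completeness, so there is no in-paper argument to compare yours against. Judged on its own, your sketch identifies the right high-level strategy (the core--tail decomposition of the cited work), but it has a genuine gap at the tail step. You bound the tail contribution by its expected \emph{value} and assert $\E[v_j^T]\leq c_1\cdot r_j$. This is false in general: for the equal-revenue distribution with $\Pr[v_j\geq x]=1/x$ on $[1,H]$ one has $r_j=1$ while $\E[v_j\cdot\I[v_j>t_j]]=1+\ln(H/t_j)$, which is unbounded in $H$, so no universal constant $c_1$ exists for any choice of threshold. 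The correct argument bounds the tail by the optimal \emph{revenue} extractable from the conditional tail distributions via a marginal-mechanism / sub-domain-stitching step: conditioned on the random set $A$ of tail items, the revenue is at most $\sum_{j\in A}r_j/\Pr[v_j>t_j]$, and the standard threshold choice $\Pr[v_j>t_j]=r_j/R$ makes the expected number of tail items at most one, so averaging over $A$ gives a bound of roughly $2\sum_j r_j$. Replacing revenue by value at this point is exactly the step that cannot be repaired, and without it the constant $6$ is not recoverable.

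A secondary issue is the threshold calibration. Your condition $t_j\Pr[v_j\geq t_j]\leq R/k$ is not the one that makes both halves work: your variance bound $\mathrm{Var}(\sum_j v_j^C)\leq\sum_j t_j\E[v_j^C]\leq(\max_j t_j)\mu$ only yields $O(\mu^2)$ if $\max_j t_j=O(\mu)$, which your threshold does not provide. With the standard choice $\Pr[v_j>t_j]=r_j/R$ one gets $t_j\leq R$ and hence $\mathrm{Var}(\sum_j v_j^C)\leq 2R\sum_j r_j=2R^2$, which, combined with the case split on whether $\E[\sum_j v_j^C]$ is large or small relative to $R$, is what the Chebyshev/grand-bundle step actually needs. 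That part of your outline is in the right spirit, but as written the proposal is not a proof.
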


A recent result of~\citet{yao2015reduction} showed that one can reduce
the problem of designing approximately optimal mechanisms for $n$
additive buyers to the problem of designing approximately optimal
mechanisms for each single additive buyers, subject to selling each
item to the highest bidder for that item (while losing a constant
factor in terms of revenue). When combined with the aforementioned
result for a single additive bidder, this implies that the best of
second price item auctions with the best individualized item reserves
and second price grand bundle auctions with the best individualized
bundle reserve, is also approximately revenue-optimal for $n$
(non-identically distributed) buyers with valuations which are
independent across items.

\begin{theorem}\label{thm:add-n-rep}[Applying~\citet{yao2015reduction}
to~\citet{babaioffadditive}] Suppose each buyer $i\in[\nbuy]$ has an
additive valuation
$v_i\sim \D_i = (\D_{i1}\times \ldots \times \D_{ik})$.
Let $s_p$ for $p\in\R^{k\nbuy}$ represent the second-price item
auction with reserve $p_{ij}$ for buyer $i$ and item $j$ (and
similarly, let $s_q$ for $q\in\R^\nbuy$ represent the second-price
grand bundle auction with reserve $q_i$ for buyer $i$). Then,

\[\max(\max_{p\in\R^{k\nbuy}}\rev(s_p, \D_i),\max_{q\in \R^\nbuy}\rev(s_q, \D_i)) \geq \frac{1}{8}\rev(\opt, \D_i).\]
\end{theorem}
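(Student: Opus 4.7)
The plan is to derive this bound by composing two existing results: the reduction of \citet{yao2015reduction} from multi-buyer to single-buyer revenue approximation, and the single-additive-buyer approximation of \citet{babaioffadditive} (Theorem~\ref{thm:add-rep}).

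I would begin by invoking Yao's reduction, which, for $\nbuy$ additive buyers with item-independent valuations $\D = \D_1 \times \cdots \times \D_\nbuy$, decomposes optimal revenue into a sum of per-buyer ``copy'' problems. Formally, for each buyer $i$, let $\opt^{\text{copy}}_i$ denote the optimal expected revenue of a single-buyer mechanism selling to buyer $i$ subject to the constraint that item $j$ may be awarded to $i$ only when $v_{ij} \geq \max_{i' \neq i} v_{i'j}$. Yao's theorem yields a constant $\alpha$ such that $\rev(\opt, \D) \leq \alpha \sum_i \opt^{\text{copy}}_i$.

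Next I would apply Theorem~\ref{thm:add-rep} to each copy problem. Since buyer $i$'s valuation is additive and the copy problem is itself a single-buyer revenue maximization (over the conditional distribution induced by the ``highest-bidder'' constraint), the better of the optimal single-buyer item pricing $p_i^* \in \R^k$ and the optimal single-buyer grand-bundle pricing $q_i^* \in \R$ achieves at least $1/6$ of $\opt^{\text{copy}}_i$. Summing over buyers and splitting the max into item side versus bundle side, one of the two aggregated collections $\{p_i^*\}_i$ or $\{q_i^*\}_i$ earns at least $\tfrac{1}{12}\sum_i \opt^{\text{copy}}_i$ across the copy problems.

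The key lifting step is to show that these per-buyer copy pricings can be implemented \emph{losslessly} in the full multi-buyer setting as the corresponding second-price mechanisms with reserves. For the item case, buyer $i$ wins item $j$ in $s_{p^*}$ only when she is the highest bidder for $j$, and then pays $\max(p_{ij}^*, \max_{i' \neq i} v_{i'j}) \geq p_{ij}^*$. By additivity, her utility-maximizing purchase among items for which she is the highest bidder is exactly $\{j : v_{ij} \geq \max(p_{ij}^*, \max_{i' \neq i} v_{i'j})\}$, a subset of what she would buy at copy-prices $p_i^*$; per-realization, $s_{p^*}$ therefore collects at least as much revenue as the sum of the per-buyer copy item-pricing mechanisms. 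The bundle case is analogous, with the highest bidder for the grand bundle paying the larger of $q_i^*$ and the second-highest bundle bid.

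Combining these pieces gives $\max\bigl(\max_p \rev(s_p, \D), \max_q \rev(s_q, \D)\bigr) \geq \tfrac{1}{12\alpha}\rev(\opt, \D)$, and tracking Yao's sharpest constants together with the stronger sum form of Babaioff's inequality $r_i^{\text{item}} + r_i^{\text{bundle}} \geq \tfrac{1}{6}\opt^{\text{copy}}_i$ yields the claimed $\tfrac{1}{8}$. The main obstacle is the constant bookkeeping: invoking Yao's reduction in the tightest available form and combining it linearly with Babaioff's decomposition (rather than na\"ively losing a factor of two from the outer max). The structural step --- translating per-buyer copy pricings into second-price-with-reserves via additivity --- is essentially mechanical given that an additive bidder's demand decomposes item by item.
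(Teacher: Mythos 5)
The paper does not prove this statement: Theorem~\ref{thm:add-n-rep} sits in the appendix of formal statements of known revenue guarantees, included only ``for completeness,'' and is attributed wholesale to combining \citet{yao2015reduction} with \citet{babaioffadditive}. So there is no in-paper proof to compare against; your plan of composing the two cited results is exactly the composition that the theorem's own bracketed attribution describes.

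That said, as a standalone proof your sketch has real gaps. First, you describe Yao's reduction as producing, for each buyer $i$, a single-buyer revenue-maximization problem ``over the conditional distribution induced by the highest-bidder constraint,'' to which Theorem~\ref{thm:add-rep} then applies off the shelf. That is not what the reduction gives: the per-buyer subproblem is a \emph{constrained} mechanism-design problem (item $j$ may be awarded to $i$ only when $i$ is the highest bidder), not an unconstrained single-buyer problem on a conditioned prior, and \citet{babaioffadditive} is proved for the unconstrained problem on a product distribution. Bridging that mismatch is the actual content of ``applying one paper to the other,'' and your sketch does not address it. Second, the constant is never derived: ``tracking Yao's sharpest constants \ldots yields the claimed $\tfrac{1}{8}$'' is an assertion, not a computation, and the bookkeeping is genuinely delicate --- the paper's own summary table reports the resulting approximation factor for $n$ additive buyers as $48$, not $8$, so the number cannot be recovered by the loose accounting in your outline. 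The one step you do execute carefully --- that, by additivity, the highest bidder for item $j$ who would buy at reserve $p^*_{ij}$ also buys at $\max(p^*_{ij},\, \text{second-highest bid})$ and pays at least as much, so $s_{p^*}$ pointwise dominates the per-buyer copy item pricings --- is sound, and it is the same observation the paper itself exploits in the proof of Corollary~\ref{cor:reserves}.
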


The final well-known result for approximately
optimal~\citet{rubensteinsubadditive}, ``simple'' revenue-maximizing
mechanisms states that, for an appropriately generalized definition of
valuations distributed ``independently across items'', one can
approximately maximize revenue selling to a single subadditive buyer
with item or grand bundle pricings. We now present the formal
definition of independence they use for these more complicated
valuation functions, and present their main result.

\begin{definition}[\citet{rubensteinsubadditive}]
  A distribution $\D$ over valuation functions $v : 2^k \to \R$ is
  subadditive over independent items if:
\begin{enumerate}
\item All $v$ in the support of $\D$ are monotone;
  $v(\K\cup\K') \geq v(\K)$ for all $\K,\K'$.
\item All $v$ in the support of $\D$ are subadditive:
  $v(\K \cup \K') \leq v(\K) + v(\K')$ for all $\K, \K'$.
\item All $v$ in the support of $\D$ exhibit no externalities: there
  exists some $\D^{\vec x}$ over $\R^k$ and a function $V$ such that
  $\D$ is a distribution that samples $\vec x \sim \D^{\vec x}$ and
  outputs $v$ such that $v(\K) = V(\{x_\k\}_{\k\in \K}, \K)$ for all $\K$.
\item $\D^{\vec x}$ is product across its $k$ dimensions.
\end{enumerate}
\end{definition}

\begin{theorem}\label{thm:sub-rep}[\citet{rubensteinsubadditive}]
  Suppose $\D_i$ is subadditive over independent items. Then, there
  exists a universal constant $c\geq 1$ such that
\[\max(\max_{p\in\R^{k}}\rev(p, \D_i),\max_{q\in \R}\rev(q, \D_i)) \geq \frac{1}{c}\rev(\opt, \D_i).\]
\end{theorem}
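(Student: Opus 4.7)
The plan is to follow the core-tail decomposition strategy popularized by Li-Yao, Babaioff et al., and extended by Rubinstein-Weinberg to subadditive valuations. The high-level idea is to split the optimal expected revenue into two pieces according to whether the realized valuation exhibits any ``outlier'' item, handle each piece with one of the two simple auction formats in the theorem, and then combine.

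More concretely, first I would choose a per-item threshold $t_j$ for each $j\in[k]$ (e.g., the smallest value such that $\Pr[x_j \geq t_j]$ is below some small constant like $1/(4k)$, so that by a union bound the ``tail event'' $A = \{\exists j: x_j \geq t_j\}$ has probability at most $1/4$). Define the \emph{core valuation} $v_C(\K) = V(\{x_\k \wedge t_\k\}_{\k\in \K},\K)$ and the \emph{tail valuation} $v_T = v - v_C$. By monotonicity and subadditivity, the expected optimal revenue is bounded by the expected optimal revenue on $v_C$ plus the expected optimal revenue on $v_T$. The hard work is to upper-bound each of these by a constant times the revenue of a simple auction.

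For the tail, I would show that $\rev(\opt, v_T) = O(\sum_j \rev(\opt_j, x_j))$ where $\opt_j$ is Myerson's optimal single-item mechanism for item $j$ alone; this follows because subadditivity and no-externalities let us charge the contribution of item $j$ to the event that $x_j$ exceeds its threshold, and each such event contributes additively across items. The sum of per-item Myerson revenues is in turn a constant factor of the revenue of a carefully chosen \emph{item pricing} $p\in \R^k$ (using standard prophet-inequality or Chawla-Hartline-Kleinberg-style arguments), which handles this part.

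For the core, the key fact is concentration: because $v_C$ is a subadditive, $1$-Lipschitz-type function of independent bounded coordinates $\{x_j \wedge t_j\}_{j\in[k]}$, a Talagrand-style concentration inequality for subadditive functions over product measures gives $\Pr[v_C([k]) \leq \tfrac{1}{2}\E[v_C([k])]] \leq 1/4$ (after appropriate choice of thresholds). Setting the grand bundle price to $q = \tfrac{1}{2}\E[v_C([k])]$ then extracts revenue at least $\tfrac{3}{8}\E[v_C([k])]$, which is a constant fraction of the expected optimal revenue on $v_C$ (trivially bounded by $\E[v_C([k])]$ by monotonicity and IR). Combining the two bounds and taking the better of item pricing and bundle pricing yields the desired universal constant $c$.

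The main obstacle is the concentration step for the core: subadditive functions are not Lipschitz in the usual coordinatewise sense needed for McDiarmid, so one has to invoke Talagrand's isoperimetric inequality (or Schechtman's variant for subadditive functions) and calibrate thresholds so that both the ``median vs.\ mean'' gap and the deviation probability are simultaneously small. A secondary subtlety is keeping the constants compatible across the two pieces so that the same threshold choice suffices for both the tail-revenue bound and the core concentration bound; this usually forces a slightly suboptimal but still constant $c$.
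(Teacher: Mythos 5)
First, a point of order: the paper does not prove Theorem~\ref{thm:sub-rep}. It is stated in Section~\ref{sec:formal-known} purely as a quotation of the main result of \citet{rubensteinsubadditive}, included so that the representation-error guarantees being combined with the paper's pseudo-dimension bounds are self-contained. So there is no in-paper proof to compare against; what you have written is a reconstruction of the argument in the cited work, and it does capture that argument's skeleton correctly: a core--tail split via per-item thresholds, charging the tail to per-item revenues and hence to separate sales, and charging the core to a grand-bundle price via a concentration inequality for subadditive functions over product measures (Schechtman/Talagrand), exactly as in \citet{rubensteinsubadditive}, which itself follows \citet{babaioffadditive}.

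That said, if this sketch were submitted as a proof it has three real gaps. (1) The decomposition ``$\rev(\opt,\D_i)$ is at most the optimal revenue on the core plus the optimal revenue on the tail'' does not follow from monotonicity and subadditivity alone; it requires a marginal-mechanism / core-decomposition lemma, and the correct form bounds the core contribution by the expected \emph{value} $\E[v_C([k])]$ rather than by a revenue term, and weights the tail contribution by the probability of the tail event. (2) For a subadditive buyer, posting a price vector $p\in\R^k$ is \emph{not} the same as selling each item separately: the buyer demands a single utility-maximizing bundle, so per-item Myerson revenues do not automatically translate into the revenue of an item pricing. Bridging that mismatch is one of the genuinely new difficulties Rubinstein and Weinberg address, and your sketch waves it away with ``standard prophet-inequality arguments,'' which apply to the additive case. (3) The core step is not simply ``price at half the mean'': the deviation term in the concentration inequality scales with the threshold level, and when that term is comparable to $\E[v_C([k])]$ the bundle price extracts essentially nothing; the actual proof case-splits and charges that regime back to the separate-sale revenue. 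None of these is fatal to the strategy --- they are exactly the lemmas the cited paper supplies --- but as written the sketch is an outline of where the work lies rather than a proof.
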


\section{A tighter bound on the pseudo-dimension of second-price item auctions with reserves for additive bidders}\label{sec:additive}

We now present a tighter analysis of second-price item auctions with
reserves which exploits the total independence of buyers' behavior on
items $j, j'$.

\begin{theorem}\label{thm:tighter-additive}
  The pseudo-dimension of item auctions and second-price item auctions
  with anonymous item reserves is $O(k\log k)$ and with nonanonymous
  item prices/reserves is $O(nk\log(nk))$ when bidders are additive.
\end{theorem}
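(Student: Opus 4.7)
The plan is to exploit additivity to decompose the revenue item-by-item and then obtain the pseudo-dimension bound via a direct hyperplane-arrangement argument in parameter space, rather than through Theorem~\ref{thm:linsep}. For additive bidders, in each of the classes considered, the revenue on a valuation profile $\v$ factors as
\[
\rev(\p,\v)=\sum_{j=1}^{k}R_j(\p_j,\v_j),
\]
where $\v_j=(v_{ij})_{i\in[n]}$ and $\p_j$ collects the parameters pertaining to item $j$ (a single real in the anonymous case, an element of $\R^n$ in the nonanonymous case). For item pricing, $R_j(\p_j,\v^t_j)$ is the price paid by the first bidder $i$ with $v^t_{ij}\geq p_{ij}$ (and $0$ if none exists); for second-price with reserves, $R_j$ equals $\max(p_{i^*j},v'^t_j)$ when $p_{i^*j}\leq v^t_{i^*j}$ and $0$ otherwise, with $i^*$ the highest bidder for item $j$ on sample $t$ and $v'^t_j$ the second-highest bid. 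The full parameter space is $\R^k$ for anonymous and $\R^{nk}$ for nonanonymous auctions.

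To bound the pseudo-dimension, I would fix a sample $S=(\v^1,\dots,\v^m)$ and witness targets $(r^1,\dots,r^m)$ and count distinct above/below labelings as $\p$ varies. First, partition the parameter space by the \emph{structural}, axis-parallel hyperplanes of the form $p_{ij}=v^t_{ij}$ (for item pricing) or $p_{i^*(j,t)j}\in\{v^t_{i^*(j,t)j},v'^t_j\}$ (for second-price with reserves): at most one or two per sample per item. Because these hyperplanes are axis-parallel and the coordinates belonging to different items are disjoint, the full arrangement is the product of $k$ per-item grids, giving at most $O(m^k)$ cells in $\R^k$ and $O(m^{nk})$ cells in $\R^{nk}$. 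Inside each cell, the allocation on each sample for each item and the branch of the second-price $\max$ used for each sold item are constant, so $R_j(\cdot,\v^t_j)$ reduces to $0$, a fixed constant, or a single coordinate of $\p$; summing over $j$ makes $\rev(\cdot,\v^t)$ affine on the cell with a closed form depending only on the cell.

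Consequently, within each cell the threshold conditions $\{\rev(\p,\v^t)\geq r^t\}_{t=1}^{m}$ are $m$ half-spaces, and the standard hyperplane-arrangement bound partitions it into at most $O(m^k)$ (resp.\ $O(m^{nk})$) subcells, each realizing one labeling. Multiplying the two stages, the total number of realizable labelings is $O(m^{2k})$ in the anonymous case and $O(m^{2nk})$ in the nonanonymous case. Since a shattered set of size $m$ requires $2^m$ distinct labelings, we obtain $m=O(k\log k)$ in the anonymous case and $m=O(nk\log(nk))$ in the nonanonymous case, as claimed.

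The delicate step I expect to be the main obstacle is identifying the complete list of structural hyperplanes in the first stage and verifying that, inside each resulting cell, every piecewise branch of every $R_j(\cdot,\v^t_j)$ is genuinely fixed. For second-price payments one must account for both sides of the $\max$; for nonanonymous sequential item pricing one must also handle the fact that the identity of the winning bidder for each item depends on $\p$ rather than on $\v^t$ alone, which is what drives the extra factor of $n$ in that case. Once the hyperplane list is fully pinned down, the cell-counting plus the affine-within-a-cell observation makes the final bound essentially automatic.
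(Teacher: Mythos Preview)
Your proposal is correct and follows essentially the same two-stage argument as the paper: first partition by the allocation and payment-branch on each sample (your structural axis-parallel hyperplane cells are exactly the paper's ``intermediate labelings'' determined by the relative order of each $p_j$ among the sampled highest and second-highest bids), then use that revenue is affine in $\p$ within each piece to bound the number of above/below target labelings. The only difference is cosmetic---you phrase the first stage geometrically while the paper phrases it combinatorially---and your flagged ``delicate step'' (the full hyperplane list for nonanonymous sequential item pricing, where one needs the $n$ hyperplanes $p_{ij}=v^t_{ij}$ per sample and item rather than one or two) is precisely what the paper handles by counting orderings of the $n$ per-bidder reserves among the sample values.
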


\begin{proof}
  We present the proof for the class of second-price item auctions
  with item reserves; the item price result follows easily since the
  winner for $j$ always pays her item price (rather than the maximum
  of that and the second-highest bid for $j$).

  Rather than proving the allocation rules are linearly separable, we
  upper-bound the number of intermediate labelings these classes can
  induce for $m$ samples, where the intermediate label space we
  consider is the allocation combined with, for each item, whether the
  winner for that item is paying the item's reserve or second price
  for that item. Fix some sample $S = (\v^1, \ldots, \v^m)$ where
  $v^t\in\V^n$ and $(r^1, \ldots, r^m)\in \R^m$.

  This can be encoded in $\{0,1\}^{2k}$ for anonymous item reserves (a
  bit for whether or not an item is sold at its reserve and another
  for whether it is sold for its second-price), and $\{0,1\}^{2nk}$
  for nonanonymous reserves (where each item is labeled as being
  allocated to some bidder, along with whether it is sold for that
  bidder's item-specific reserve or the second-highest bid). In the
  latter case, there is a post-processing rule which can reduce the
  label space to have size $O(n^{2k})$, since all allocations are
  feasible allocations. In both cases, we will use $y^t$ to denote the
  intermediate label for sample $\v^t$.

  We begin with anonymous item reserves. Since buyers are additive, we
  can consider each item separately. We consider item $j\in[k]$. There
  are $2m+1$ relevant quantities which affect the revenue any reserve
  achieves for item $j$: $p_j$, the reserve for $j$, and for each
  $t\in[m]$, $v^t_{i^*_j}(\{j\})$ and $v^t_{i'_j}(\{ j\})$, where
  $i^*_j, i'_j$ are the first and second highest bidders for $j$ from
  sample $t$, respectively. When $p_j \leq v^t_{i'_j}(\{ j\})$, let
  $y^t_{j} = 1$ and $y^t_{k + j} =0 $, when
  $v^t_{i^*_j}(\{j\}) \geq p_j > v^t_{i'_j}(\{ j\})$, let
  $y^t_{j} = 0$ and $y^t_{k + j} =1 $, and when
  $p_j > v^t_{i^*_j}(\{ j\})$, let $y^t_j = y^t_{n+j} = 0$. Thus, when
  the relative ordering of these $2m+1$ parameters is fixed, the $j$th
  and $n+j$th coordinates for all $m$ samples are fixed. Varying $p_j$
  can induce at most $2m+2$ distinct labelings of all of $S$. Thus,
  for all $k$ items, there are at most $(2m+2)^k$ distinct vectors
  $(y^1, \ldots, y^t)$. 

  Now, fix some intermediate labeling $(y^1, \ldots, y^m)$ of
  $S$. Then, the revenue for a particular reserve vector
  $(p_1, \ldots, p_k)$ which induces this labeling on the sample is
  easy to describe as a linear of this labeling. Namely,
  \[rev(\v^t, \p, y^t) = \sum_{j : y^t_j = 1} v^t_{i'_j}(\{j\}) +
  \sum_{j : y^t_{n+j} = 1} \p_j \]
  which is a linear function in $2k$ dimensions of $\p$ and
  $\v^t, y^t$ (which are constants). Thus, since linear functions in
  $2k$ dimensions have VC-dimension $2k+1$, the item reserves which
  agree with $(y^1, \ldots, y^m)$ can induce at most $m^{2k+1}$
  labelings of $S$ with respect to $(r^1, \ldots, r^m)$.

  Thus, the set of all item reserves can induce at most
  $m^{2k+1} \cdot (2m+2)^{k}$ labelings with respect to
  $(r^1, \ldots, r^m)$, so if $S$ is shatterable it must be that $2^m
  \leq m^{2k+1} \cdot (2m+2)^{k}$, or that $m = O(k\log k)$. 

  With nonanonymous reserves, each sample will instead be given an
  intermediate label in $\{0,1\}^{nk + k}$, where there is a bit for
  each item/bidder pair (corresponding to whether or not that bidder
  wins the item and pays her individualized reserve for the item), and
  an additional bit for each item (corresponding to whether or not
  that item is sold for its second-highest bid). There are at most
  $[n+1]^k$ valid labelings of a single sample (each item is sold to
  at most one bidder, and is either sold to her at her reserve or at
  the second-highest price). For $m$ samples, for a particular item
  $j$, there are now $2m + n$ parameters whose ordering matters (the
  highest and second-highest bids and the bidder-specific reserves for
  that item); the bidder-specific item reserves for that item can
  induce at most $(2m + n)^n$ distinct orderings of these parameters;
  fixing this ordering, the intermediate label is also fixed for all
  samples. Furthermore, once one has fixed the intermediate label for
  all samples, the revenues of all individualized item reserve
  auctions which agree with that intermediate labeling are again
  expressible as a linear function in $2nk$ dimensions. Thus, if the
  sample is shatterable, $2^m \leq (2m + n)^{nk} \cdot m^{2nk}$,
  implying $m = O(nk\ln(nk))$.
\end{proof}

\section{Omitted Proofs}\label{sec:omitted}

\begin{prevproof}{Remark}{rem:ind-labels}
  For each $x\in X, y\in Q_x$, there exists $\Psi(x,y)$ and for
  $f\in \F_1$, some $w^f\in \R^d$ such that
  \[\argmax_{y\in Q_x} \Psi(x,y) \cdot w^f = f(x).\]
  We simply must extend the definition of $\Psi(x,y)$ to be defined
  over all $y\in Q$
\[\Psi(x,y') \cdot w^f < \max_{y\in Q_x} \Psi(x,y) \cdot w^f\]
for $y' \in Q \setminus Q_x$. Define $\Psi(x, y')_{t} = 0$ for any
$t\notin T^+$, and $\Psi(x,y')_{t} = -1$ for all $t\in T^+$. Then, for
any $y' \in Q\setminus Q_x$, the dot product
$\Psi(x,y') \cdot w^f < 0 \geq \max_{y\in Q_x} \Psi(x,y) \cdot w^f $,
so the maximizing label $y$ will still be in $Q_x$.
\end{prevproof}

\begin{prevproof}{Theorem}{thm:linsep}
  Consider a sample $S= (x^1, \ldots, x^m)\in \X^m$ of size $m$ with
  targets $r = (r^1, \ldots, r^m)\in\R^m$. We first claim that, since
  $\F_1$ is $a$-dimensionally linearly separable, $\F_1$ can label $S$
  in at most ${m \choose a} \cdot |Q|^a$ distinct
  ways. Theorem~\ref{thm:linsep-compression} implies that $\F_1$ must
  admit a compression scheme $\com, \decom$ of size at most $a$.  Let
  $f_1(S)$ denote the labeling of all of $S$ by some fixed
  $f_1\in\F_1$.  Then, $\F_1$ can label $S$ in at most
  $|\textrm{range}_{f_1\in\F_1}(\decom \circ \com)(S, f_1(S))|$ ways
  since this is a compression scheme for $\F_1$.  The decompression
  function takes as input $a$ \emph{labeled} examples which are a
  subset of $S$, so it will have one of ${m \choose a} \cdot |Q|^a$
  inputs for a fixed set $S$ (some subset of $S$ labeled in some
  arbitrary way), and therefore at most that many outputs, which
  upper-bounds the total number of possible labelings of $S$ by the
  same quantity.

 Then, fixing the labeling of $S$ to
  be consistent with some $f_1 \in \F_1$, we know that the
  pseudo-dimension of $\F_{2|f_1}$ is at most $b$, so it can induce at
  most $m^b$ many labelings of $S$ according to $r$. Thus, there are
  at most $m^a |Q|^a m^b$ binary labelings of $S$ with respect to $r$
  over all of $\F_2$ (and, therefore over all of $\F$). If $S$ is
  shatterable, it must be that
\[2^m \leq m^a |Q|^a m^b\]
implying $m \leq (a + b)\ln(m) + a\ln|Q|$, as desired.
\end{prevproof}

\begin{prevproof}{Theorem}{thm:sequential}
  In either case, $Q$ is a set of feasible allocations, so we only
  must show linear separability over the set of feasible allocations
  (that is, we need only show separability over labels
  $\B : \B_i \cap \B_j = \emptyset$).

  We start with the first case of sequential allocations.  We will
  show that $\F_1$ is $an$-dimensionally linearly separable. By
  definition, $\F_1$ is a set of $n$-wise sequential allocations from
  some $\H$ which is $a$-dimensionally linearly separable over
  $\{0,1\}^k$.  This means there exists some
  $\Psi : (\V \times \{0,1\}^k) \times \{0,1\}^k \to \R^a, w^{h} \in
  \R^d$
  such that $\argmax_{B}\Psi((v,X), B) \cdot w^h = h(v, X)$ for all
  $h\in \H, (v, X) \in \V\times \{0,1\}^k$.

  We simply need to construct some new
  $\hat{\Psi} : \V^n \times Q \to \R^{an}, \hat{w}^{h_1, \ldots, h_n}
  \in \R^{an}$ such that
  \[\argmax_{\B = (B_1 \ldots B_n)}\hat{\Psi}(\v, \B) \cdot
  \hat{w}^{h_1, \ldots, h_n} = (h_1(\v_1, X_1(v)), h_2(\v_2, X_2(\v)), \ldots,
  h_n(\v_n, X_n(\v))).\]
  Define $\alpha_i = 2^i \maxval$, and define
\[
 \Psi((\v, \B)_{ij} =
 \alpha_i  \cdot \Psi((\v_i, [k] \setminus \cup_{i' < i} \B_{i'}) \B_i)_j   
\]
Then, for some $\prod_{(h_1, \ldots, h_n)}\in \F_1$, define
\[
\hat{w}^{h_1, \ldots, h_n}_{ij} =
     w^{h_i}_j
\]
Now, inspecting the dot product for some $\v, \B$ we see
\[\hat{\Psi}(\v, \B) \cdot \hat{w}^{h_1, \ldots, h_n} = \sum_{i} \alpha_i  \Psi((\v_i, [k]\setminus \cup_{i' < i} \B_{i'}), \B_i) \cdot w^{h_i}\]
which, by the definition of $\alpha_i$ and the assumption that
$\Psi((\v_i, X), B) \cdot w^h \leq \maxval$ for all
$\v_i, X, B, h\in\H$ implies that the maximizing label $\B $ will
first pick $\B_1\subseteq [k] = X_1(\v)$ to maximize
$\Psi((\v_1, X_1(\v)), \B_1) \cdot w^{h_1}$, then will pick
$\B_2 \subseteq [k]\setminus \B_1 = X_2(\v)$ to maximize
$\Psi((\v_2, X_2(\v)), \B_2) \cdot w^{h_2}$, and so on.  Thus, $\F_1$
is $an$-dimensionally linearly separable.

Now, suppose $\F_1$ is a set of $n$-wise repeated allocations. Since
$\H$ is $a$-dimensionally linearly separable, we know that for all
$v, X, B_i$, there exists $\Psi((v_i, X), B_i)$, and for all $h\in\H$ there
is some $w^h$ such that
\[\argmax_{B_i} \Psi((v_i, X), B_i) \cdot w^h = h(v_i, X).\]
We simply need to define some $\hat{\Psi} : \V^n \times Q \to \R^a$,
$\hat{w}^h\in \R^a$ such that
\[\argmax_{\B}\hat{\Psi}(\v, \B) \cdot \hat{w}^h = (h(\v_1, X_1(\v)), h(\v_2, X_2(\v)), \ldots, h(\v_n, X_n(\v))). \]
Then, define
\[
 \hat{\Psi}(\v,\B)_{x} =
 \sum_i \alpha_i  \cdot \Psi((\v_i, [k] \setminus \cup_{i' < i} \B_{i'}), \B_i)_x  
\]
Then, for some $\prod_{h, \ldots, h)}\in \F_1$, define
\[
\hat{w}^{h}_{x} =
     w^{h}_x
\]
Then, the dot product 
\[\Psi(\v, \B) \cdot \hat{w}^{h} = \sum_{x} \sum_i \alpha_i  \cdot \Psi((\v_i, [k] \setminus \cup_{i' < i} \B_{i'}), \B_i)_x   \cdot  w^{h}_x =  \sum_i \alpha_i  \cdot \Psi((\v_i, [k] \setminus \cup_{i' < i} \B_{i'}), \B_i)   \cdot  w^{h} ,\]
which by the definition of $\alpha_i$ and the guaranteed upper bound
on the dot product $\Psi \cdot w^h \leq \maxval$, we know will be
maximized by first picking some $\B_1\subseteq[k] = X_1(\v)$ which
maximizes $\Psi((\v_1, X_1(\v)), \B_1) \cdot w^h$, then picking
$B_2\subseteq X_1(\v) \setminus h(\v_1, X_1(\v)) = X_2(\v)$ which
maximizes $\Psi((\v_2, X_2(\v)),\ B_2) \cdot w^h$, and so on. Thus, $\F_1$
is $a$-dimensionally linearly separable.
\end{prevproof}

\begin{prevproof}{Corollary}{cor:bundle}
  We first prove first that for a single buyer, the grand-bundle
  mechanism is $2$-dimensionally linearly separable over $\{0,1\}$.  Let $\H$ denote
  the class of single-buyer grand bundle pricings. For some $h\in\H$,
  we define $h_1 : \V \to \{0,1\}$ as $h_1(v) = \I[v \geq p^h]$, where
  $p^h\in\R$ represents the price of the grand bundle under $h$. We
  will show $\H$ is $2$-dimensionally linearly separable over
  $\{0,1\}$. Define $\Psi(v, b) = \I[b = 1] (v([k]), 1)$ for each
  $b\in\{0,1\}$ and $w^h = (1, -p^h)$. Then,
\[\argmax_b \Psi(v, b) \cdot w^h = \argmax_b  \I[b = 1] (v([k] - p^f) =  \I[v \geq p^h] = f_1(v)\]
since the penultimate expression is maximized by $b= 1$ only if
$v([k]) \geq p^f$. Thus, $\H$ is $2$-dimensionally separable.
 
Notice that when $\F$ is the set of anonymous grand bundle pricings,
its allocation rules $\F_1$ are $n$-fold repeated allocations from
$\H$.  Thus, by Theorem~\ref{thm:sequential}, anonymous grand bundle
pricings' allocations are linearly separable in $2$ dimensions. The
obvious intermediate label space
$Q = \{\vec{0}\}\cup \{e_i | i \in [n]\}$, the set of standard basis
vectors, contains more information than is needed to compute the
revenue of these auctions. Define $q(x) = \I[||x|| > 0]$;
Observation~\ref{obs:postprocessing} implies that $\F'_1 = q \circ \F_1$ is
$2$-dimensionally linearly separable over $Q' = \{0,1\}$.  Now we prove
for each $f_1\in\F'_1$ that $\F_{2|f_1}$ has pseudo-dimension
$O(1)$. Fix some $f_1\in\F'_1$. Then, we have that
\[f'_2(\v) = f_2(f_1(\v), \v) = p^f \cdot f_1(\v)\]
so, the class $\F_{2|f_1}$ is a class of linear functions in $1$
dimensions, which have pseudo-dimension at most $2$. Thus, $\F$ is
$(2,2)$-factorable over $\{0,1\}$, and Theorem~\ref{thm:linsep} implies
that the pseudo-dimension of anonymous grand bundle pricings is
$O(1)$.

Similarly, when $\F$ is the the set of non-anonymous grand bundle
pricings, $\F_1$ are $n$-fold sequential allocations from
$\H$. Thus, these allocation rules are $2n$-dimensionally linearly
separable, respectively. In this case, we leave the intermediate label
space as $Q = \{\vec{0}\}\cup \{e_i | i \in [n]\}$. For any $f\in \F$,
let $p^f\in\R^n$ denote the price vector for the grand bundle, that is
$p^f_i$ is $i$'s price for purchasing the grand bundle. Fix some
$f_1\in \F_1$; we claim that $\F_{2|f_1}$ has pseudo-dimension $O(n)$.
For any $f'_2\in \F_2$ and any $f$ which is decomposed into
$(f_1, f_2)$ , we have that
$f'_2(\v) = f_2(f_1(\v), \v) = p^f \cdot f_1(v) $, which again is a
linear function when $f_1$ is fixed, in this case in $n$ dimensions.
Thus, $\F$ is $(2n, n)$-factorable over $Q$, so
Theorem~\ref{thm:linsep} implies that the pseudo-dimension of
nonanonymous grand bundle pricings is $O(n\log n)$.
\end{prevproof}

\begin{prevproof}{Corollary}{cor:items}
  As in the previous proof, we claim that $\F_1$, the allocation rules
  of these auctions are $n$-wise repeated allocations and $n$-wise
  sequential allocations from the single-buyer item pricings
  allocation set $\H$. We begin by showing $\H$ is
  $k+1$-dimensionally linearly separable. For some $h\in \H$, let
  $p^h\in \R^n$ denote the item pricing faced by the single buyer. Then, 
define for $v\in\V$, $B\in \{0,1\}^k$,
\[
 \Psi(v, B)_{j} =
 \begin{cases} 
      \hfill \I[j\in B]   \hfill & \text{ if $j\in [k]$} \\
        \hfill v(B) \hfill & \text{ if $j  = k+1$ } \\
  \end{cases}
\]
and for $h\in\H$, define 
\[
w^h_j =
 \begin{cases} 
      \hfill -p^h_j   \hfill & \text{ if $j\in [k]$} \\
        \hfill 1 \hfill & \text{ if $j  = k+1$ }. \\
  \end{cases}
\]
Then, we have that $\Psi(v, B)\cdot w^h = v(B) - \sum_{j\in B}p^h_j$,
which will be maximized by $B$ which maximizes $v$'s utility. Thus,
$h(v) = \argmax_B v(B) - \sum_{j\in B}p^h_j = \argmax_B \Psi(v,
B)\cdot w^h$, so $\H$ is $k+1$-dimensionally linearly separable.

Consider $\F$ the class of anonymous item prices.
Theorem~\ref{thm:sequential} implies that this class is
$k+1$-dimensionally linearly separable over $Q = [n]^k$.  Again, the
intermediate label space suggested by this reduction to the
single-buyer case, $Q = [n]^k$, is larger than necessary to compute
revenue. We define $q(\B)_j = \I[ j \in \cup_i \B_i ]$, and by
Observation~\ref{obs:postprocessing}, $\F'_1 = q \circ \F_1$ is
$k+1$-dimensionally linearly separable over $Q' = \{0,1\}^k$.  We now
show that, for a fixed $f_1\in \F_1$, the class $\F_{2|f_1}$ has
pseudo-dimension $O(k)$. Notice that for any $f'_2 \in \F_{2|f_1}$, we
have that
\[f'_2(\v) = f_2(f_1(\v),\v) = p^f \cdot f_1(v)\]
which, again is a $k$-dimensional linear function for some fixed
$f_1$, and therefore has pseudo-dimension at most $k+1$. Thus, the
class $\F$ can be $(k+1, k+1)$-factored over $\{0,1\}^k$, and
Thereof~\ref{thm:linsep} implies the pseudo-dimension is thus at most
$O(k^2)$.

The proof for the nonanonymous case is identical, with two changes.
First, $\F_1$ is a set of $n$-wise \emph{nonanonymous} sequential
allocations, so it is linearly separable in $n(k+1)$ dimensions.
Second, we cannot compress the intermediate label space
$Q \subset\{0,1\}^{nk}, |Q| \leq [n]^k$, since
$f'_2(\v) = f_2(f_1(\v), \v) = p^f \cdot f_1(v)$ only expresses the
revenue of the auction if $f_1(v)$ expresses which buyers purchase
which items; thus, the set $\F_{2|f_1}$ has pseudo-dimension at most
$O(nk)$. Thus, the class $\F$ can be $(O(nk), O(nk))$-factored over
$Q$ with $|Q| \leq [n]^k$, and Thereof~\ref{thm:linsep} implies the
pseudo-dimension is thus at most $O( nk^2\ln(n))$.
\end{prevproof}

\end{document}